\documentclass[twoside,11pt]{article}

%

\usepackage[preprint]{ewrl_2022}
\usepackage{macros}



\ewrlheading{15}{2022}{September 2022, Milan, Italy}{Author 1 and Author 2}



\ShortHeadings{Optimistic PAC Reinforcement Learning: the Instance-Dependent View}{Tirinzoni, Al-Marjani, and Kaufmann}
\firstpageno{1}

\begin{document}

\title{Optimistic PAC Reinforcement Learning: the Instance-Dependent View}

\author{\name Andrea Tirinzoni \email tirinzoni@fb.com \\
       \addr Meta AI\\
       Paris, France
       \AND
       \name Aymen Al-Marjani \email aymen.al\_marjani@ens-lyon.fr \\
       \addr UMPA, ENS Lyon\\
       Lyon, France
       \AND
       \name Emilie Kaufmann \email emilie.kaufmann@univ-lille.fr \\
       \addr Univ. Lille, Inria,  CNRS, Centrale Lille, UMR 9189 - CRIStAL\\
       Lille, France}

\maketitle

\begin{abstract}
Optimistic algorithms have been extensively studied for regret minimization in episodic tabular MDPs, both from a minimax and an instance-dependent view. However, for the PAC RL problem, where the goal is to identify a near-optimal policy with high probability, little is known about their instance-dependent sample complexity. A negative result of \cite{wagenmaker21IDPAC} suggests that optimistic sampling rules cannot be used to attain the (still elusive) \emph{optimal} instance-dependent sample complexity. On the positive side, we provide the first instance-dependent bound for an optimistic algorithm for PAC RL, BPI-UCRL,  for which only minimax guarantees were available \citep{Kaufmann21RFE}. While our bound features some minimal visitation probabilities, it also features a refined notion of sub-optimality gap compared to the value gaps that appear in prior work. Moreover, in MDPs with deterministic transitions, we show that  BPI-UCRL is actually near-optimal.  On the technical side, our analysis is very simple thanks to a new ``target trick'' of independent interest. We complement these findings with a novel hardness result explaining why the instance-dependent complexity of PAC RL cannot be easily related to that of regret minimization, unlike in the minimax regime.    
\end{abstract}

\begin{keywords}
  Optimism, exploration, PAC reinforcement learning 
\end{keywords}   


\section{Introduction}

We are interested in the probably approximately correct (PAC) identification of the best policy in an episodic Markov Decision Process (MDP) with finite state space $\cS$, action space $\cA$, and horizon $H$. We denote by  $\cM := (\cS, \cA, (p_h,\nu_h)_{h\in[H]}, s_1, H)$ such an MDP. Each episode starts in the initial state $s_1\in\cS$ and lasts $H$ steps (called stages). In each stage $h\in[H]$, the agent is in some state $s_h\in\cS$, it takes an action $a_h\in\cA$, it receives a random reward drawn from a distributions $\nu_h(s,a)$ with expectation $r_h(s,a)$, and it transitions to a next state $s_{h+1}\in\cS$ with probability $p_h(\cdot|s_h,a_h)$. A (deterministic) policy $\pi = (\pi_h)_{h \in [H]}$ is a sequence of mappings $\pi_{h} : \cS \rightarrow \cA$. The action-value function $Q_h^{\pi}(s,a)$ quantifies the expected cumulative reward when starting in state $s$ at stage $h$, taking action $a$ and following policy $\pi$ until the end of the episode. It satisfies the Bellman equations: for all $h \in [H]$, $s\in \cS$, and $a\in\cA$, 
\[Q_h^{\pi}(s,a) = r_h(s,a) + \sum_{s' \in \cS} p_h(s'|s,a) V^{\pi}_{h+1}(s'),\]
where $V_{h}^\pi(s) := Q_h^\pi(s,\pi_h(s))$ is the corresponding value function (with $V_{H+1}^{\pi} = 0$). A policy $\pi^\star$ is optimal if $V_1^{\pi^\star}(s_1) = \max_{\pi}V_1^{\pi}(s_1)$. From the theory of MDPs \citep{Puterman94MDP}, a sufficient condition is that $\pi^\star_h(s) \in \argmax_{a\in\cA}Q_h^\star(s,a)$, where the optimal Q-function satisfies $Q_h^\star(s,a) = r_h(s,a) + \sum_{s' \in \cS} p_h(s'|s,a) V^{\star}_{h+1}(s')$, with $V_h^\star(s) = \max_{a\in\cA}Q_h^\star(s,a)$ and $V_{H+1}^\star(s) = 0$. This condition implies that $\pi^\star$ maximizes the expected return at any state and stage simultaneously, while the (weaker) optimality condition only requires so at the initial state $s_1$.

In online episodic reinforcement learning (RL), the agent interacts with the MDP $\cM$ by choosing, in each episode $t\in\mathbb{N}$, a policy $\pi^{t}$ and collecting a trajectory in the MDP under this policy: $(s_h^{t},a_h^{t},r_h^{t})_{h\in[H]}$ where $s_1^{t} = s_1$ and, for all $h \in [H]$, $a_h^{t} = \pi_h^{t}(s_h^{t})$, $r_h^t \sim \nu_h(s_h^t,a_h^t)$, and $s_{h+1}^{t} \sim p_h(\cdot | s_h^t,a_h^{t})$. The choice of $\pi^{t}$ based on previously observed trajectories is called the \emph{sampling rule}. Several objectives have been studied in the literature. An agent seeking to maximize the total reward received in $T$ episodes equivalently aims at minimizing the (pseudo) regret 
\[\mathcal{R}_{\cM}(T) := \sum_{t=1}^{T} \left(V_1^\star(s_1)  - V_1^{\pi^{t}}(s_1)\right).\]
In PAC identification (or PAC RL), the agent's sampling rule is coupled with a (possibly adaptive) stopping rule $\tau$ after which the agent stops collecting trajectories and returns a guess for the optimal policy $\widehat{\pi}$. Given two parameters $\varepsilon,\delta >0$ with $\delta \in (0,1)$, the algorithm $((\pi^t)_{t\in\mathbb{N}}, \tau, \widehat{\pi})$ is $(\varepsilon,\delta)$-PAC if it returns an $\varepsilon$-optimal policy with high probability, i.e.,
\[\mathbb{P}_{\cM}\left(V_1^{\widehat{\pi}}(s_1) \geq V_1^\star(s_1) - \epsilon\right) \geq 1 - \delta.\]
The goal is to have $(\varepsilon,\delta)$-PAC algorithms using a small number of exploration episodes $\tau$ (a.k.a. sample complexity). 

The PAC RL framework was originally introduced by \cite{Fiechter94} and there exists algorithms attaining a sample complexity $O((SAH^3/\varepsilon^2)\log(1/\delta))$ \citep{dann15PAC,Menard21RFE}, which is optimal in a minimax sense in time-inhomogeneous MDPs \citep{Omar21LB}. These algorithms use an \emph{optimistic} sampling rule coupled with a well-chosen stopping rule. Optimistic sampling rules, in which the policy $\pi^t$ is the greedy policy with respect to an upper confidence bound on the optimal Q function, have been mostly proposed for regret minimization (see \cite{Neu20Optimism} for a survey). In particular, the UCBVI algorithm of \cite{Azar17UCBVI} (with Bernstein bonuses) attains minimax optimal regret in episodic MDPs. Recent works have provided instance-dependent upper bounds on the regret for optimistic algorithms \citep{simchowitz2019non,xu2021fine,dann21ReturnGap}. An instance-dependent bound features some complexity term which depends on the MDP instance, typically through some notion of sub-optimality gap. To the best of our knowledge, for PAC RL in episodic MDPs the only algorithms with instance-dependent upper bound on their sample complexity are MOCA \citep{wagenmaker21IDPAC} and EPRL \citep{tirinzoni22deterministic}, the latter being analyzed for MDPs with deterministic transitions. Neither of these algorithms are based on an optimistic sampling rule.

Notably, \cite{wagenmaker21IDPAC} proved that no-regret sampling rules (including optimistic ones) cannot achieve the instance-optimal rate for PAC identification. The intuition is quite simple: an optimal algorithm for PAC RL must visit every state-action pair at least a certain amount of times, and this requires playing policies that cover the whole MDP in the minimum amount of episodes. On the other hand, a regret-minimizer focuses on playing high-reward policies which, depending on the MDP instance, might be arbitrarily bad at visiting hard-to-reach states. 

Despite not being instance-optimal, optimistic sampling rules are simple (e.g., as opposed to the complex design of MOCA), computationally efficient, and do not require any sophisticated elimination rule (e.g., as opposed to the one proposed by \cite{tirinzoni22deterministic} to obtain the optimal gap dependence in deterministic MDPs). However, it remains an open question what instance-dependent complexity they can achieve.

\paragraph{Contributions}

Our main contribution is a new instance-dependent analysis for (a variant of) BPI-UCRL, a PAC RL algorithm based on an optimistic sampling rule proposed by \cite{Kaufmann21RFE} with only a worst-case sample complexity bound. In particular, in Theorem \ref{th:main-sample-complexity} we show that the sample complexity of BPI-UCRL can be bounded by
\begin{align*}
    \tau \lesssim \sum_{h\in[H]}\sum_{s\in\cS}\sum_{a\in\cA} \frac{H^4\log(1/\delta)}{p_h^{\min}(s,a)\max\{\widetilde{\Delta}_h(s,a),\epsilon\}^2},
\end{align*}
where $p_h^{\min}(s,a)$ is the minimum positive probability to reach $(s,a)$ at stage $h$ across all deterministic policies, while $\widetilde{\Delta}_h(s,a) := \min_{\pi:p_h^\pi(s,a) > 0}\max_{\ell \in [H]}\max_{s' : p_{\ell}^{\pi}(s')>0} ( V^\star_\ell(s') - V_\ell^\pi(s'))$ is a new notion of sub-optimality gap that we call the \emph{conditional return gap}.\footnote{We denote by $p_h^\pi(s,a)$ (resp. $p_h^\pi(s)$) the probability that $\pi$ visits $(s,a)$ (resp. $s$) at stage $h$.} Interestingly, we show that the gaps $\widetilde{\Delta}_h(s,a)$ are larger than both the value gaps of \cite{wagenmaker21IDPAC} and of the (deterministic) return gaps of \cite{tirinzoni22deterministic}. Notably, we prove this result with a remarkably simple analysis based on a new ``target trick'': instead of bounding the number of times each state-action-stage triplet $(s,a,h)$  is visited (as it is common in the bandit literature), we control the number of times the played policy visits $(s,a,h)$ with positive probability with $(s,a,h)$ being the least visited triplet so far, an event that we refer to as $(s,a,h)$ being ``targeted''. 

Our second contribution is to prove that, unlike what happens in the minimax setting, there is no clear relationship between regret and sample complexity in the instance-dependent framework. Indeed, the ``regret-to-PAC conversion'' often proposed to turn a regret minimizer into an $(\varepsilon,\delta)$-PAC algorithm for PAC RL \citep[e.g.,][]{Jin18OptQL,Menard21RFE,wagenmaker21IDPAC} cannot directly exploit an instance-dependent upper bound on the regret. In Theorem~\ref{th:regret-vs-bpi}, we construct an MDP for which the sample complexity suggested by a regret-to-PAC conversion cannot be attained by any $(\varepsilon,\delta)$-correct algorithm for PAC RL. In particular, this implies that one cannot take an instance-dependent regret bound for an optimistic algorithm \citep[e.g.,][]{simchowitz2019non} and turn it into an instance-dependent sample complexity bound of the form above: a specific analysis for PAC RL, like the one proposed in this paper, is actually required.


\section{The BPI-UCRL Algorithm}\label{sec:bpi-ucrl}

Let $n_h^t(s,a) := \sum_{j=1}^t \indi{s_h^j=s,a_h^j=a}$ be the number of times the state-action pair $(s,a)$ has been visited at stage $h$ up to episode $t$. We introduce the maximum-likelihood estimators \[\widehat{r}_h^t(s,a) := \frac{1}{n_h^t(s,a)}\sum_{j=1}^t \indi{s_h^j=s,a_h^j=a}r_h^j \ \ \text{ and } \ \ \widehat{p}_h^t(s'|s,a) := \frac{1}{n_h^t(s,a)}\sum_{j=1}^t \indi{s_h^j=s,a_h^j=a,s_{h+1}^j=s'}\] for $r_h(s,a)$ and $p_h(s'|s,a)$, respectively. As common, and without loss of generality, we shall assume that reward distributions are supported on $[0,1]$. We define inductively the following upper and lower bounds on the optimal value function. Letting $\overline{Q}^{t}_{H+1} = \underline{Q}^{t}_{H+1} = 0$, for all $h \in [H]$ we have
\begin{align*}
    \overline{Q}_h^{t}(s,a) &= \min\bigg(H-h+1,\ \widehat{r}_h^t(s,a) + b_h^t(s,a) + \sum_{s' \in \cS}\widehat{p}_h^t(s' |s,a) \overline{V}_{h+1}^{t}(s')\bigg), \quad &&\overline{V}_h^{t}(s) = \max_{a\in\cA}\overline{Q}_h^{t}(s,a),\\
    \underline{Q}_h^{t}(s,a) &= \max\bigg(0,\ \widehat{r}_h^t(s,a) - b_h^t(s,a) + \sum_{s' \in \cS}\widehat{p}_h^t(s' |s,a) \underline{V}_{h+1}^{t}(s')\bigg),\quad &&\underline{V}_h^{t}(s) = \max_{a\in\cA}\underline{Q}_h^{t}(s,a),
\end{align*}
where $b_h^t(s,a)$ is a confidence bonus defined as
\begin{align*}
    b_h^t(s,a) := (H-h+1) \left( \sqrt{\frac{{\beta}(n_h^{t}(s,a),\delta)}{n_h^{t}(s,a)}} \wedge 1 \right)
\end{align*}
for a suitable threshold ${\beta}$ that we shall specify in the analysis. The BPI-UCRL algorithm \citep{Kaufmann21RFE} can be described as follows:\footnote{The original BPI-UCRL algorithm uses slightly different Q-function bounds which do not feature $\widehat{r}_h^t(s,a)$ and $\widehat{p}_h^t(s'|s,a)$ explicitly but rather scale with KL confidence regions around them (see Appendix D of \cite{Kaufmann21RFE}). Here we write the explicit version obtained by appyling Pinsker's inequality, though our analysis also holds for the original confidence intervals.}
\begin{itemize}
    \item the \textbf{sampling rule} prescribes $\pi^{t+1}_h(s) = \argmax_{a\in\cA} \overline{Q}_h^{t}(s,a)$ for each $t\in\mathbb{N}$;
    \item the \textbf{stopping rule} is $\tau = \inf \left\{ t \in \N : \max_{a} \overline Q_1^{t}(s_1,a) - \max_{a} \underline Q_1^{t}(s_1,a)\leq \varepsilon\right\}$;
    \item the \textbf{recommendation rule} is $\widehat{\pi}^\tau_h(s) = \argmax_{a\in\cA} \underline{Q}_h^\tau(s,a)$.
\end{itemize}
Note that the sampling rule of BPI-UCRL is essentially the UCBVI algorithm with Hoeffding's bonuses proposed by \cite{azar2017minimax} for regret minimization. Such bonuses can be improved using Bernstein's inequality, yielding either UCBVI with Bernstein's bonuses \citep{azar2017minimax} or EULER \citep{Zanette19Euler}. While this would likely reduce the dependence on the horizon from $H^4$ to $H^3$ in our final sample complexity bound, we focus on Hoeffding's bonuses for simplicity since the extension to Bernstein's bonuses is somewhat straightforward given existing analyses.


\section{An Instance-dependent Analysis of BPI-UCRL}\label{sec:analysis}

Before stating and proving our main result, we introduce our novel notion of sub-optimality gap. Formally, the \emph{conditional return gap} of any state-action pair $(s,a)$ at stage $h\in[H]$ is
\begin{align}\label{eq:gaps}
    \widetilde{\Delta}_h(s,a) := \min_{\pi\in\Pi:p_h^\pi(s,a) > 0}\max_{\ell\in[H]}\max_{s'\in\cS : p_\ell^\pi(s') > 0} \big( V^\star_\ell(s') - V_\ell^\pi(s') \big),
\end{align}
where we recall that $p_h^\pi(s) := \mathbb{P}^\pi(s_h=s)$ and $p_h^\pi(s,a) = p_h^\pi(s)\indi{\pi_h(s)=a}$. The intuition behind this definition is quite simple: in order to figure out whether $(s,a)$ is sub-optimal at stage $h$, the agent must learn that all policies visiting $(s,a)$ at stage $h$ with positive probability are indeed sub-optimal. The complexity for detecting whether any of such policies (say, $\pi$) is sub-optimal is proportional to the maximum gap between the optimal value function and the one of $\pi$ across all possible states visited by $\pi$ itself. This is a gap between expected returns conditioned on different starting states and stages (hence the name conditional return gap). It turns out that these gaps are larger than both the value gaps $\Delta_h(s,a) := V^\star_h(s) - Q^\star_h(s,a)$ \citep{wagenmaker21IDPAC} and the variant of the return gaps $\overline{\Delta}_h(s,a) = V_1^\star(s_1) - \max_{\pi\in\Pi:p_h^\pi(s,a) > 0}V_1^\pi(s_1)$ introduced by \cite{tirinzoni22deterministic}.\footnote{The return gaps were introduced by \cite{tirinzoni22deterministic} only for deterministic MDPs. Here we replace their maximum over policies visiting $(s,a,h)$ with probability 1 by the one over policies visiting it with positive probability.}
\begin{proposition}\label{prop:gaps-comparison}
    For all $s\in\cS,a\in\cA,h\in[H]$, $\widetilde{\Delta}_h(s,a) \geq \Delta_h(s,a)$ and $\widetilde{\Delta}_h(s,a) \geq \overline{\Delta}_h(s,a)$. Moreover, if the MDP has deterministic transitions, $\widetilde{\Delta}_h(s,a) = \overline{\Delta}_h(s,a)$.
\end{proposition}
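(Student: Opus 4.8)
The plan is to establish the two inequalities by lower-bounding the inner double maximum in \eqref{eq:gaps} via a single, well-chosen pair $(\ell,s')$, and to get the equality under deterministic transitions by showing that, along a fixed trajectory, the most sub-optimal state is always the initial one.

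\emph{First inequality.} Fix $(s,a,h)$ and any $\pi\in\Pi$ with $p_h^\pi(s,a)>0$; then $p_h^\pi(s)>0$ and $\pi_h(s)=a$. Restricting the maximum over $(\ell,s')$ in \eqref{eq:gaps} to $(\ell,s')=(h,s)$ shows it is at least $V^\star_h(s)-V_h^\pi(s)$. Since $\pi_h(s)=a$ we have $V_h^\pi(s)=Q_h^\pi(s,a)$, and combining the pointwise domination $V_{h+1}^\pi\leq V_{h+1}^\star$ with the Bellman equations gives $Q_h^\pi(s,a)=r_h(s,a)+\sum_{s'}p_h(s'|s,a)V_{h+1}^\pi(s')\leq r_h(s,a)+\sum_{s'}p_h(s'|s,a)V_{h+1}^\star(s')=Q_h^\star(s,a)$. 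Hence the inner maximum is at least $V^\star_h(s)-Q^\star_h(s,a)=\Delta_h(s,a)$; taking the minimum over $\pi$ yields $\widetilde\Delta_h(s,a)\geq\Delta_h(s,a)$.

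\emph{Second inequality.} Again fix $\pi\in\Pi$ with $p_h^\pi(s,a)>0$. Every trajectory starts at $s_1$, so $p_1^\pi(s_1)=1>0$; restricting the inner maximum to $(\ell,s')=(1,s_1)$ shows it is at least $V_1^\star(s_1)-V_1^\pi(s_1)$. Taking the minimum over all such $\pi$ gives $\widetilde\Delta_h(s,a)\geq \min_{\pi:p_h^\pi(s,a)>0}\big(V_1^\star(s_1)-V_1^\pi(s_1)\big)=V_1^\star(s_1)-\max_{\pi:p_h^\pi(s,a)>0}V_1^\pi(s_1)=\overline\Delta_h(s,a)$.

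\emph{Deterministic case.} Here it remains only to prove $\widetilde\Delta_h(s,a)\leq\overline\Delta_h(s,a)$. Each deterministic $\pi$ induces a single deterministic trajectory $s_1,s_2^\pi,\dots,s_H^\pi$, so $p_\ell^\pi(s')>0$ iff $s'=s_\ell^\pi$ and the inner double maximum reduces to $\max_{\ell\in[H]}\big(V^\star_\ell(s_\ell^\pi)-V_\ell^\pi(s_\ell^\pi)\big)$. The key step is that this sequence is non-increasing in $\ell$: with $a_\ell:=\pi_\ell(s_\ell^\pi)$, the deterministic Bellman equations give $V_\ell^\pi(s_\ell^\pi)=r_\ell(s_\ell^\pi,a_\ell)+V_{\ell+1}^\pi(s_{\ell+1}^\pi)$, while $V^\star_\ell(s_\ell^\pi)\geq Q^\star_\ell(s_\ell^\pi,a_\ell)=r_\ell(s_\ell^\pi,a_\ell)+V^\star_{\ell+1}(s_{\ell+1}^\pi)$; subtracting gives $V^\star_\ell(s_\ell^\pi)-V_\ell^\pi(s_\ell^\pi)\geq V^\star_{\ell+1}(s_{\ell+1}^\pi)-V^\pi_{\ell+1}(s_{\ell+1}^\pi)$. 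Thus the maximum over $\ell$ equals the $\ell=1$ term $V_1^\star(s_1)-V_1^\pi(s_1)$, and minimizing over $\pi$ with $p_h^\pi(s,a)>0$ (equivalently $s_h^\pi=s$ and $\pi_h(s)=a$) gives $\widetilde\Delta_h(s,a)=V_1^\star(s_1)-\max_{\pi:p_h^\pi(s,a)>0}V_1^\pi(s_1)=\overline\Delta_h(s,a)$. I expect the only non-routine ingredient to be this monotonicity-along-the-trajectory observation; all the other steps are straightforward restrictions of the maxima combined with the elementary fact $V^\pi\leq V^\star$.
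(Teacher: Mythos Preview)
Your proof is correct and follows essentially the same approach as the paper: both inequalities are obtained by restricting the inner maximum in \eqref{eq:gaps} to $(\ell,s')=(h,s)$ and $(\ell,s')=(1,s_1)$ respectively, and the deterministic equality is obtained by showing the sequence $V^\star_\ell(s_\ell^\pi)-V^\pi_\ell(s_\ell^\pi)$ is non-increasing along the trajectory. The paper phrases this last step via the identity $V^\star_h(s)-V_h^\pi(s)=\bE^\pi\big[\sum_{\ell=h}^H\Delta_\ell(s_\ell,\pi_\ell(s_\ell))\mid s_h=s\big]$ (a sum of non-negative value gaps in the deterministic case), whereas you derive the same monotonicity directly from a one-step Bellman expansion; these are equivalent.
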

\begin{proof}
    For the first inequality, we have
    \begin{align*}
        \widetilde{\Delta}_{h}(s,a) \geq V_h^\star(s) - \max_{\pi: p_{h}^{\pi}(s,a) > 0} V_h^{\pi}(s) = V_h^\star(s) - \max_{\pi: p_{h}^{\pi}(s,a) > 0} Q_h^{\pi}(s,a)
        = V_h^\star(s) - Q_h^{\star}(s,a) = \Delta_h(s,a).
    \end{align*}
    The second one is trivial by lower bounding the maximum with $s'=s_1$ and $\ell=1$. To see the equality, note that $V^\star_h(s) - V_h^\pi(s) = \bE^\pi\left[ \sum_{\ell=h}^H \Delta_\ell(s_\ell,\pi_\ell(s_\ell)) \mid s_h=s \right]$. In the deterministic case, this implies that $V^\star_h(s) - V_h^\pi(s)$ is a sum of $H-h+1$ fixed (non-negative) value gaps. Therefore, the maximum in \eqref{eq:gaps} must be attained at the initial stage and state, which implies the statement.
\end{proof}

The first return gaps were actually introduced in the regret-minimization literature by \cite{dann21ReturnGap} as
\[\overline{\text{gap}}_h(s,a) = \Delta_h(s,a) \vee \frac{1}{H}\min_{\pi\in\Pi : \bP(\cB_h(s,a))>0} \bE^{\pi}\left[\sum_{\ell = 1}^{h} \Delta_{\ell}(s_{\ell},a_{\ell}) \mid \cB_h(s,a)\right],\]
where $\cB_h(s,a) = \{s_h = s, a_h = a, \exists \ell \leq h : \Delta_{\ell}(s_{\ell},a_{\ell}) > 0\}$ is the event that policy $(s,a)$ is visited at stage $h$ after at least one mistake was made. We found no clear relationship between $\overline{\text{gap}}_h(s,a)$ and $\widetilde{\Delta}_h(s,a)$ besides the fact that the former is also comparing returns (from stage 1), as for any policy playing optimally from stage $h+1$, $V^{\star}(s_1) - V^{\pi}(s_1) = \bE^{\pi}\left[\sum_{\ell=1}^{h}\Delta_{\ell}(s_{\ell},a_{\ell})\right]$. We now state and prove our main result.

\begin{theorem}\label{th:main-sample-complexity}
    Let $\beta(t,\delta) := (\sqrt{\beta^r(t,\delta)} + \sqrt{2\beta^p(t,\delta)})^2$, where $\beta^{r}(t,\delta) := \frac{1}{2}(\log(3SAH/\delta) + \log(e(1+t)))$ and $\beta^{p}(t,\delta)  :=  \log(3SAH/\delta) + (S-1)\log(e(1+t/(S-1)))$. With probability at least $1-\delta$, BPI-UCRL outputs a policy $\widehat{\pi}^{\tau}$ satisfying $V_1^{\widehat\pi^{\tau}}(s_1)\geq V_1^{\star}(s_1) - \varepsilon$ using a number of episodes upper bounded as
    \begin{align*}
        \tau \leq H^4\sum_{h=1}^H \sum_{s\in\cS}\sum_{a\in\cA} \frac{720\log\frac{3SAH}{\delta} + 1729S\log\left(1152\frac{S^2AH^5}{p_{\min}\epsilon^2}\log\frac{3SAH}{\delta}\right)}{p_{h}^{\min}(s,a) \max \{\widetilde{\Delta}_{h}(s,a),\epsilon \}^2},
    \end{align*}
    where $p_{h}^{\min}(s,a) := \min_{\pi\in\Pi: p_{h}^{\pi}(s,a) > 0} p_{h}^{\pi}(s,a)$ and $p_{h}^{\min}(s,a) = +\infty$ when $(s,a,h)$ is unreachable by any policy.
\end{theorem}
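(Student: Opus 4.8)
The plan is to condition on a favourable event $\mathcal{E}$ on which all empirical reward/transition estimates are accurate, establish optimism and correctness on $\mathcal{E}$, and then run a regret-style analysis powered by the \emph{target trick}. First, using the time-uniform concentration arguments of \cite{Kaufmann21RFE} (Laplace/method-of-mixtures), I would show that with the stated thresholds $\beta^r,\beta^p$ the event $\mathcal{E}$ on which $|\widehat r_h^t(s,a)-r_h(s,a)|\le \sqrt{\beta^r(n_h^t(s,a),\delta)/n_h^t(s,a)}$ and $\|\widehat p_h^t(\cdot|s,a)-p_h(\cdot|s,a)\|_1\le\sqrt{2\beta^p(n_h^t(s,a),\delta)/n_h^t(s,a)}$ for all $t,h,s,a$ holds with probability at least $1-\delta$ (splitting $\delta$ between the reward family, the transition family, and a Chernoff bound used later — this is the factor $3$ in the thresholds). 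On $\mathcal{E}$, a backward induction gives the pointwise sandwich $\underline Q_h^t\le Q_h^\star\le\overline Q_h^t$ (the bonus $b_h^t=(H-h+1)\sqrt{\beta/n_h^t}$ dominates both the reward error and the propagated transition error $(H-h)\|\widehat p-p\|_1$), hence $\underline V_h^t\le V_h^\star\le\overline V_h^t$ at every state and stage. A second backward induction, using that $\widehat\pi^\tau$ is greedy with respect to $\underline Q^\tau$, yields $\underline V_1^\tau(s_1)\le V_1^{\widehat\pi^\tau}(s_1)$, and together with the stopping condition this gives $V_1^\star(s_1)-V_1^{\widehat\pi^\tau}(s_1)\le\overline V_1^\tau(s_1)-\underline V_1^\tau(s_1)\le\varepsilon$, i.e.\ correctness.

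The heart of the proof is a per-episode bound. For $t<\tau$, let $(\bar s_t,\bar a_t,\bar h_t)$ minimise $n_h^t(s,a)$ over all $(s,a,h)$ with $p_h^{\pi^{t+1}}(s,a)>0$ — the \emph{targeted} triplet. Two facts then hold \emph{simultaneously}: (i) since the stopping rule did not fire, $\varepsilon<\overline V_1^t(s_1)-\underline V_1^t(s_1)$; (ii) since $p_{\bar h_t}^{\pi^{t+1}}(\bar s_t,\bar a_t)>0$, the greedy policy $\pi^{t+1}$ is feasible in the minimum defining $\widetilde\Delta_{\bar h_t}(\bar s_t,\bar a_t)$, so $\widetilde\Delta_{\bar h_t}(\bar s_t,\bar a_t)\le\max_{\ell\in[H]}\max_{s':p_\ell^{\pi^{t+1}}(s')>0}\big(V_\ell^\star(s')-V_\ell^{\pi^{t+1}}(s')\big)$. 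I would unroll the Bellman recursions: $\overline V_1^t(s_1)-\underline V_1^t(s_1)$ telescopes into $\sum_h\widehat{\mathbb{E}}^{\pi^{t+1},t}[2b_h^t]$ along trajectories under the empirical kernel $\widehat p^t$ (no extra transition-error term, since $\overline V^t$ and $\underline V^t$ share the kernel $\widehat p^t$), while for each reachable $(\ell,s')$, optimism gives $V_\ell^\star(s')-V_\ell^{\pi^{t+1}}(s')\le\overline V_\ell^t(s')-V_\ell^{\pi^{t+1}}(s')\le\sum_{j\ge\ell}\widehat{\mathbb{E}}^{\pi^{t+1},t}[2b_j^t+H\|\widehat p_j^t-p_j\|_1\,|\,s_\ell=s']$. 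The crucial observation is $\mathrm{supp}(\widehat p_h^t(\cdot|s,a))\subseteq\mathrm{supp}(p_h(\cdot|s,a))$ (an empirical transition is recorded only if it is possible), so every triplet occurring in these unrolled sums is reachable by $\pi^{t+1}$ under the true kernel and hence has count at least $n_{\bar h_t}^t(\bar s_t,\bar a_t)$; bounding each $b_j^t$ and each transition error by $H\sqrt{\beta(t,\delta)/n_{\bar h_t}^t(\bar s_t,\bar a_t)}$ and summing over at most $H$ stages yields $\max\{\widetilde\Delta_{\bar h_t}(\bar s_t,\bar a_t),\varepsilon\}\lesssim H^2\sqrt{\beta(\tau,\delta)/n_{\bar h_t}^t(\bar s_t,\bar a_t)}$, i.e.\ $n_{\bar h_t}^t(\bar s_t,\bar a_t)\lesssim H^4\beta(\tau,\delta)/\max\{\widetilde\Delta_{\bar h_t}(\bar s_t,\bar a_t),\varepsilon\}^2$.

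To conclude, for each $(s,a,h)$ let $N_h^\tau(s,a)$ be the number of episodes $t<\tau$ at which $(s,a,h)$ is targeted, so $\tau=\sum_{s,a,h}N_h^\tau(s,a)$ (with a canonical tie-break). Whenever $(s,a,h)$ is targeted, $\pi^{t+1}$ reaches it with probability at least $p_h^{\min}(s,a)$, so the counter $n_h^t(s,a)$ grows by at least $p_h^{\min}(s,a)$ in expectation per targeting episode; a multiplicative Chernoff bound on this adapted Bernoulli sum (the last of the three $\delta$-slices) gives $p_h^{\min}(s,a)N_h^\tau(s,a)\lesssim\sup_{t<\tau}n_h^t(s,a)+\log(\cdot)$, and the per-episode bound caps $\sup_{t<\tau}n_h^t(s,a)$ by $O(H^4\beta(\tau,\delta)/\max\{\widetilde\Delta_h(s,a),\varepsilon\}^2)$ (unreachable triplets are never targeted, matching $p_h^{\min}=+\infty$). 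Summing over $(s,a,h)$, plugging in $\beta(\tau,\delta)\lesssim S\log(SAH\tau/\delta)$, and resolving the resulting transcendental inequality $\tau\lesssim A\log(B\tau)$ in $\tau$ (which also shows $\tau<\infty$ a.s.) produces the stated bound, with the explicit constants $720,1729,1152$ coming from tracking the concentration constants and the inequality-resolution step. The main obstacle, I expect, is precisely the per-episode bound: one has to see that the target trick — isolating the least-visited reachable triplet — is exactly what allows all the bonuses along a trajectory to be replaced by a single dominant term, that feasibility of the optimistic policy in the min defining $\widetilde\Delta$ is what makes the \emph{conditional return gap} appear (rather than a coarser quantity like a value gap), and that the support-containment property of $\widehat p^t$ is what keeps every unrolled triplet under control; the remaining concentration and bookkeeping steps are routine.
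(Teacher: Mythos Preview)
Your proof follows essentially the same route as the paper: a good event yielding optimism and correctness, then the target trick to count ``targeting'' episodes per $(s,a,h)$ via the conditional return gap and the stopping rule. The bookkeeping differs only in minor ways: you target the minimum-count reachable triplet and unroll under $\widehat p^{\,t}$ using the support-containment observation, whereas the paper targets the maximum-bonus triplet, unrolls under the true kernel $p$ (its Lemma~\ref{lem:policy-value-gap-vs-conf}), and bounds $Z_h^T(s,a)$ by summing the bonus inequality over targeting episodes and invoking $\overline n_h^{t-1}(s,a)\ge Z_h^{t-1}(s,a)\,p_h^{\min}(s,a)$ together with a pigeon-hole lemma rather than a separate Chernoff step.

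One step as written does not go through. The claim that ``the per-episode bound caps $\sup_{t<\tau}n_h^t(s,a)$'' is false: after the \emph{last} targeting time $t_N$ for $(s,a,h)$, the triplet can continue to be visited (while some other triplet is the target), so $n_h^t(s,a)$ may grow well beyond $C:=O\!\big(H^4\beta(\tau,\delta)/\max\{\widetilde\Delta_h(s,a),\varepsilon\}^2\big)$ for $t_N<t<\tau$. Your Chernoff inequality $p_h^{\min}(s,a)\,N_h^\tau(s,a)\lesssim \sup_{t<\tau}n_h^t(s,a)+\log(\cdot)$ is then useless. The fix is immediate once seen: the per-episode bound \emph{does} hold at $t_N$, so $n_h^{t_N}(s,a)\le C$, and the visits accumulated \emph{during targeting episodes only} are at most $n_h^{t_N+1}(s,a)\le C+1$; applying your Chernoff bound to that sub-sum (rather than to all visits) gives $p_h^{\min}(s,a)\,N_h^\tau(s,a)\lesssim C+\log(\cdot)$ as intended. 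The paper's pseudo-count plus pigeon-hole argument sidesteps this issue entirely by never needing a pointwise bound on $n_h^t(s,a)$.
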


Theorem \ref{th:main-sample-complexity} shows that the sample complexity of BPI-UCRL is upper bounded by a function that scales inversely with the conditional return gaps squared multipled by the minimum visitation probabilities of each triplet $(s,a,h)$. We recall that BPI-UCRL also enjoys the worst-case sample complexity bound $\tau \leq \widetilde{O}(SAH^4\log(1/\delta)/\epsilon^2)$ proved by \cite{Kaufmann21RFE}, which is minimax optimal up to a factor $H$. Thus, one can always take the minimum between this worst-case bound and the instance-dependent one of Theorem \ref{th:main-sample-complexity}. Before proving our main theorem, we briefly discuss how it relates to existing results.

\paragraph{Comparison to \cite{wagenmaker21IDPAC}} The sample complexity upper bound achieved by the MOCA algorithm of \cite{wagenmaker21IDPAC} is roughly
\begin{align*}
    \tau \leq \widetilde{O}\left(H^2\log(1/\delta) \sum_{h\in[H]} \sum_{s\in\cS}\sum_{a\in\cA} \min\left(\frac{1}{p_{h}^{\max}(s,a){\Delta}_{h}(s,a)^2}, \frac{p_{h}^{\max}(s,a)}{\epsilon^2}\right) + \frac{H^4 |\mathrm{OPT}(\epsilon)|\log(1/\delta)}{\epsilon^2}\right),
\end{align*}
where $p_{h}^{\max}(s,a) := \max_{\pi\in\Pi: p_{h}^{\pi}(s,a) > 0} p_{h}^{\pi}(s,a)$ and $\mathrm{OPT}(\epsilon)$ is roughly the set of all $\epsilon$-optimal triplets. In contrast to the bound we obtained for BPI-UCRL, this one scales with the maximum probabilities for reaching the different state-action pairs. This is obtained thanks to the clever exploration strategy of MOCA which focuses on efficiently covering the whole MDP. However, the bound of \cite{wagenmaker21IDPAC} scales with value gaps which, from Proposition \ref{prop:gaps-comparison}, are provably smaller than our conditional return gaps. Overall, the two bounds result non-comparable as there exist MDP instances where the one of BPI-UCRL is smaller, and viceversa for the one of MOCA. While we are able to show this improved gap dependence thanks to optimism alone, we are not sure how to achieve it with a suitable elimination rule that could be plugged into the MOCA exploration strategy to obtain the best of these two bounds.

\paragraph{The dependence on $p_{h}^{\min}(s,a)$}

One might be wondering whether a better dependence than $p_{h}^{\min}(s,a)$ can be achieved with an optimistic rule like BPI-UCRL. We conjecture that this is not possible, at least in a worst-case sense. In fact, \cite{wagenmaker21IDPAC} already proved that there exists an MDP instance in which any no-regret sampling rule (thus including optimistic ones) suffers a depence on the minimum visitation probabilities, while a ``smart'' PAC RL algorithm does not. The intuition is that a no-regret algorithm focuses on playing high-reward policies which, depending on the MDP instance, might be arbitrarily bad at exploring the state space. In our context, this means that, if the policy visiting $(s,a,h)$ with largest reward is also the one that visits it with lowest probability, an optimistic sampling rule is likely to play such policy quite frequently and thus its sample complexity will scale inversely by $p_{h}^{\min}(s,a)$ as we show.

\paragraph{Deterministic MDPs (comparison to \cite{tirinzoni22deterministic})}

If the MDP has deterministic transitions, we have $\widetilde{\Delta}_h(s,a) = \overline{\Delta}_h(s,a)$ (see Proposition \ref{prop:gaps-comparison}) and $p_h^{\min}(s,a) = 1$ if state $s$ is reachable by some policy at stage $h$, while $p_h^{\min}(s,a) = +\infty$ in the opposite case. Theorem \ref{th:main-sample-complexity} then implies that 
\begin{align*}
    \tau \leq \widetilde{O}\left(H^4\sum_{h\in[H]} \sum_{s\in\cS_h}\sum_{a\in\cA} \frac{\log(1/\delta) + S\log\log(1/\delta)}{\max \{\overline{\Delta}_{h}(s,a),\epsilon \}^2}\right),
\end{align*}
where $\cS_h$ is the subset of states reachable at stage $h$. Up to the extra multiplicative $H^2$ and $S\log\log(1/\delta)$ terms, this matches the bound obtained by \cite{tirinzoni22deterministic} for the EPRL algorithm with a maximum-diameter sampling rule that is informed a-priori about the MDP being deterministic. These extra terms arise because BPI-UCRL needs to concentrate the transition probabilities to work for general stochastic MDPs. If we knew that the MDP is deterministic, we could modify the bonuses as $b_h^t(s,a) := \sqrt{\frac{{\beta}(n_h^{t}(s,a),\delta)}{n_h^{t}(s,a)}} \wedge 1$ and the thresholds as $\beta(t,\delta) := \beta^r(t,\delta)$. This would yield sample complexity $\tau \leq \widetilde{O}\left(H^2\sum_{h\in[H]} \sum_{s\in\cS_h}\sum_{a\in\cA} \log(1/\delta)/\max \{\overline{\Delta}_{h}(s,a),\epsilon \}^2\right)$ which matches exactly the one of EPRL with maximum-diameter sampling and which is at most a factor of $H^3$ sub-optimal w.r.t. the instance-dependent lower bound of \cite{tirinzoni22deterministic}. This is quite remarkable since EPRL obtains the ``optimal'' dependence on the deterministic return gaps $\overline{\Delta}_{h}(s,a)$ using a clever elimination rule, while here we show optimism alone is enough. We note, however, that reducing the sub-optimal dependence on $H^3$ still requires smarter exploration strategies than optimism, like the maximum-coverage one proposed by \cite{tirinzoni22deterministic}.

\subsection{Proof of Theorem \ref{th:main-sample-complexity}}

All lemmas and proofs not explicitly reported here can be found in Appendix \ref{app:analysis}.

We carry out the proof under the ``good event'' $\cE := \cE^r \cap \cE^p \cap \cE^c$, where
\begin{align*}
        \cE^r &:= \left\{\forall t\in\mathbb{N}_{>0},s\in\cS,a\in\cA,h\in[H] : \Big| r_h(s,a) - \widehat{r}^{t}_h(s,a) \Big| \leq \sqrt{\frac{\beta^{r}(n_h^t(s,a),\delta)}{n_h^{t}(s,a)\vee 1}}\right\},
        \\ \cE^p &:= \left\{\forall t\in\mathbb{N}_{>0},s\in\cS,a\in\cA,h\in[H] : \mathrm{KL}\left(\widehat{p}^{t}_h(\cdot | s,a) , p_h(\cdot | s,a)\right) \leq {\frac{\beta^{p}(n_h^t(s,a),\delta)}{n_h^{t}(s,a)\vee 1}}\right\},
        \\ \cE^c &:= \left\{\forall t\in\mathbb{N}_{>0},s\in\cS,a\in\cA,h\in[H] : n_h^t(s,a) \geq \frac{1}{2}\overline{n}_h^t(s,a) - \log(3SAH/\delta)\right\}.
\end{align*}
    Note that event $\cE^c$ relates the counts $n_h^{t}(s,a)$ to the pseudo-counts $\overline{n}_{h}^{t}(s,a) := \sum_{j=1}^{t} p_{h}^{\pi^j}(s,a)$.
    Thanks to Lemma \ref{lem:good-event}, we have $\bP(\cE) \geq 1 - \delta$ and, thus, the final result will hold with the same probability. 
    
    This good event is  identical to the one used in the original (minimax) analysis of BPI-UCRL \citep{Kaufmann21RFE}. On this good event, one can prove that our (slighlty different) bounds $\overline{Q}_h^{t}(s,a), \underline{Q}_h^{t}(s,a)$ are respectively upper and lower bounds on the optimal action value $Q^\star_h(s,a)$, for all $(s,a,h)$ (see Lemma~\ref{lem:conf-Q}, which justifies the choice of threshold $\beta$). The correctness follows from this fact using the same arguments as Theorem 11 of \cite{Kaufmann21RFE}. The original part of our proof is the way we upper bound the sample complexity on the good event $\cE$.   
    
Our proof is based on the following ``target trick'' which extends the one used by \cite{tirinzoni22deterministic} to MDPs with stochastic transitions. Fix any $s\in\cS$, $a\in\cA$, and $h\in[H]$. Let us introduce the event ``$(s,a,h)$ is targeted at time $t$'' as
\begin{align*}
    G^t_{s,a,h} := \left\{ p_{h}^{\pi^t}(s,a) > 0, (s,a,h) \in \argmax_{(s',a',\ell) : p_{\ell}^{\pi^t}(s',a') > 0} b_\ell^{t-1}(s',a') \right\}.
\end{align*}
Intutively, $(s,a,h)$ is targeted at time $t$ if (1) it is visited with positive probability by $\pi^t$ and (2) it maximizes the bonuses at time $t-1$ (i.e., the current uncertainty) across all triplets visited by $\pi^t$. Let $Z_{h}^\tau(s,a) := \sum_{t=1}^\tau \indi{G_{s,a,h}^t}$ be the number of times $(s,a,h)$ is targeted up the stopping time. Since at each time step at least one triplet is targeted,
\begin{align}\label{eq:tau-target-trick}
\tau \leq \sum_{h=1}^H \sum_{s\in\cS}\sum_{a\in\cA} Z_h^{\tau-1}(s,a) + 1.
\end{align}
We shall now focus on bounding $Z_h^T(s,a)$ for some time $T>0$ at the end of which the algorithm did not stop. Thanks to \eqref{eq:tau-target-trick}, this will imply a bound on the final stopping time.

We first state the following crucial result which relates confidence intervals and conditional return gaps.

\begin{lemma}\label{lem:policy-value-gap-vs-conf}
    Under event $\cE$, for any $t\in\mathbb{N}_{>0},s\in\cS,h\in[H]$,
    \begin{align*}
        V_h^\star(s) - V_h^{\pi^{t+1}}(s) \leq 2\sum_{\ell=h}^H\sum_{s'\in\cS} p_\ell^{\pi^{t+1}}(s'| s,h) b_\ell^t(s',\pi_\ell^{t+1}(s')),
    \end{align*}
    where $p_\ell^{\pi}(s'| s,h) := \bP^{\pi}(s_\ell=s' | s_h=s)$.
\end{lemma}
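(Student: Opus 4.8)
The plan is to run the familiar ``optimism $\Rightarrow$ value-gap'' (UCBVI-type) argument, but carried out with respect to the \emph{true} trajectory law of $\pi^{t+1}$ so that the conditional visitation probabilities $p_\ell^{\pi^{t+1}}(\cdot|s,h)$ arise naturally. Fix $t$ and set $\delta_h(s) := \overline{V}_h^{t}(s) - V_h^{\pi^{t+1}}(s)$. By Lemma~\ref{lem:conf-Q}, on $\cE$ we have $\overline{V}_h^{t}(s) \ge V_h^\star(s) \ge V_h^{\pi^{t+1}}(s)$, so $\delta_h(s)\ge 0$ and it suffices to prove $\delta_h(s) \le 2\sum_{\ell=h}^{H}\sum_{s'} p_\ell^{\pi^{t+1}}(s'|s,h)\, b_\ell^t(s',\pi_\ell^{t+1}(s'))$. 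Writing $a := \pi_h^{t+1}(s)$, the greedy sampling rule gives $\overline Q_h^t(s,a) = \overline V_h^t(s)$, whence $\delta_h(s) = \overline Q_h^t(s,a) - Q_h^{\pi^{t+1}}(s,a)$.

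The core step I would establish first is the one-step recursion, with $a := \pi_h^{t+1}(s)$ as above,
\begin{align*}
    \delta_h(s) \;\le\; 2\, b_h^t(s,a) \;+\; \sum_{s'\in\cS} p_h(s'|s,a)\, \delta_{h+1}(s').
\end{align*}
To prove it, upper bound $\overline Q_h^t(s,a)$ by the second argument of the $\min$ in its definition (valid whether or not the clip at $H-h+1$ is active), subtract the Bellman equation $Q_h^{\pi^{t+1}}(s,a) = r_h(s,a) + \sum_{s'} p_h(s'|s,a) V_{h+1}^{\pi^{t+1}}(s')$, and regroup:
\begin{align*}
    \delta_h(s) \;\le\; \underbrace{\big(\widehat r_h^t(s,a) - r_h(s,a)\big)}_{(\mathrm{I})} + b_h^t(s,a) + \underbrace{\sum_{s'}\big(\widehat p_h^t(s'|s,a) - p_h(s'|s,a)\big)\,\overline V_{h+1}^{t}(s')}_{(\mathrm{II})} + \sum_{s'} p_h(s'|s,a)\,\delta_{h+1}(s').
\end{align*}
On $\cE^r$, $(\mathrm{I}) \le \sqrt{\beta^r(n_h^t(s,a),\delta)/(n_h^t(s,a)\vee1)}\wedge 1$. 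For $(\mathrm{II})$, I would use that $\overline V_{h+1}^{t}$ is valued in $[0,H-h]$ and re-center it by the constant $(H-h)/2$ (legitimate since $\widehat p_h^t(\cdot|s,a)$ and $p_h(\cdot|s,a)$ are probability vectors, so constants cancel), then apply Hölder's inequality followed by Pinsker's inequality and $\cE^p$ to get $(\mathrm{II}) \le (H-h)\big(\sqrt{2\beta^p(n_h^t(s,a),\delta)/(n_h^t(s,a)\vee1)}\wedge 1\big)$. A short case analysis on whether the truncations are active then shows $(\mathrm{I})+(\mathrm{II}) \le (H-h+1)\big(\sqrt{\beta(n_h^t(s,a),\delta)/(n_h^t(s,a)\vee1)}\wedge 1\big) = b_h^t(s,a)$ --- this is exactly why $\beta$ is defined as $(\sqrt{\beta^r}+\sqrt{2\beta^p})^2$ --- and the one-step recursion follows.

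The last step is routine: unroll the recursion from stage $h$ down to $H$ using $\delta_{H+1}\equiv 0$, noting that $p_h(s'|s,\pi_h^{t+1}(s)) = \bP^{\pi^{t+1}}(s_{h+1}=s'\mid s_h=s)$, so the iterated transition kernels telescope into $p_\ell^{\pi^{t+1}}(\cdot|s,h)$; combining with $V_h^\star(s)-V_h^{\pi^{t+1}}(s)\le\delta_h(s)$ gives the claim. I expect the only delicate point to be the bookkeeping inside the one-step recursion: juggling the two cases of the $\min$, the two $\wedge1$ clips, and the $\mathrm{KL}$-to-$\ell_1$ conversion so that the reward and transition estimation errors together fit under a single copy of $b_h^t(s,a)$; beyond $\cE^r$, $\cE^p$ and the optimism of Lemma~\ref{lem:conf-Q}, no further property of the good event (in particular not $\cE^c$) is used here.
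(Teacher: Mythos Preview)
Your proposal is correct and follows essentially the same route as the paper: both arguments bound $V_h^\star(s)-V_h^{\pi^{t+1}}(s)$ by $\overline V_h^{t}(s)-V_h^{\pi^{t+1}}(s)$ via optimism, derive the one-step recursion $\delta_h(s)\le 2\,b_h^t(s,\pi_h^{t+1}(s)) + \sum_{s'} p_h(s'|s,\pi_h^{t+1}(s))\,\delta_{h+1}(s')$ by expanding the Bellman equations and controlling the reward and transition estimation errors with $\cE^r$, $\cE^p$ and Pinsker, and then unroll under the true transition kernel. The only cosmetic difference is that the paper phrases the induction in terms of $\overline Q_h^{t,\pi^{t+1}}$ rather than $\overline V_h^{t}$, but these coincide since $\pi^{t+1}$ is greedy with respect to $\overline Q_h^{t}$.
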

Let $(\tilde{s}_t,\tilde{h}_t) \in \argmax_{(s',\ell):p_\ell^{\pi^t}(s')>0} (V_\ell^\star(s') - V_\ell^{\pi^{t}}(s'))$. Using Lemma \ref{lem:policy-value-gap-vs-conf} with this couple,
\begin{align*}
    \max_{\ell\in[H]}\max_{s'\in\cS : p_\ell^{\pi^t}(s')>0}\big(V_\ell^\star(s') - V_\ell^{\pi^{t}}(s')\big) \leq 2\sum_{\ell=\tilde{h}_t}^H\sum_{s'\in\cS} p_\ell^{\pi^{t}}(s'| \tilde{s}_t,\tilde{h}_t) b_\ell^{t-1}(s',\pi_\ell^{t}(s')).
\end{align*}
Summing both sides for all episodes where $(s,a,h)$ is targeted up to $T$ and using that $p_h^{\pi^t}(s,a) > 0$ under $G_{s,a,h}^t$,
\begin{align}\label{eq:ineq-target-trick}
    2\sum_{t=1}^T \indi{G_{s,a,h}^t} \sum_{\ell=\tilde{h}_t}^H\sum_{s'\in\cS} p_\ell^{\pi^{t}}(s'| \tilde{s}_t,\tilde{h}_t) b_\ell^{t-1}(s',\pi_\ell^{t}(s'))
     \geq Z_{h}^T(s,a) \widetilde{\Delta}_h(s,a).
\end{align}
Note that, for each time $t$, since $p_{\tilde{h}_t}^{\pi^t}(\tilde{s}_t)>0$, then $p_\ell^{\pi^{t}}(s'| \tilde{s}_t,\tilde{h}_t) > 0$ implies that $p_\ell^{\pi^{t}}(s') > 0$.
Using that, under $G_{s,a,h}^t$, $(s,a,h)$  maximizes the bonuses at time $t-1$ over all triplets visited by $\pi^t$, we can upper bound the left-hand side as
\begin{align*}
    \sum_{t=1}^T \indi{G_{s,a,h}^t} \sum_{\ell=\tilde{h}_t}^H\sum_{s'\in\cS} p_\ell^{\pi^{t}}(s'| \tilde{s}_t,\tilde{h}_t) b_\ell^{t-1}(s',\pi_\ell^{t}(s')) 
    &\leq H\sum_{t=1}^T \indi{G_{s,a,h}^t} b_{h}^{t-1}(s,a) 
    \\ &\stackrel{(a)}{\leq} 2H^2 \sum_{t=1}^T \indi{G_{s,a,h}^t} \sqrt{\frac{{\beta}(\overline{n}_{h}^{t-1}(s,a),\delta)}{\overline{n}_{h}^{t-1}(s,a) \vee 1}}
    \\ &\stackrel{(b)}{\leq} 2H^2\sum_{t=1}^T \indi{G_{s,a,h}^t}\sqrt{\frac{\beta(T,\delta)}{Z_{h}^{t-1}(s,a) p_{h}^{\min}(s,a) \vee 1}}
    \\ &\stackrel{(c)}{\leq} 4H^2\sqrt{\frac{\beta(T,\delta) Z_{h}^T(s,a)}{p_{h}^{\min}(s,a)}}.
\end{align*}
where (a) uses Lemma 7 of \cite{Kaufmann21RFE} together with the definition of $b_{h}^{t-1}(s,a)$, (b) uses that $\overline{n}_{h}^{t-1}(s,a) \geq \sum_{j=1}^{t-1} \indi{G_{s,a,h}^j} p_{h}^{\pi^j}(s,a) \geq Z_{h}^{t-1}(s,a) p_{h}^{\min}(s,a)$, and (c) uses the pigeon-hole principle (see Lemma \ref{lem:pigeon-hole}). Plugging this into \eqref{eq:ineq-target-trick} and solving the resulting inequality in $Z_{h}^T(s,a)$, we obtain
\begin{align*}
    Z_{h}^T(s,a) \leq \frac{64H^4\beta(T,\delta)}{p_{h}^{\min}(s,a)\widetilde{\Delta}_{h}(s,a)^2}.
\end{align*}
A similar derivation using the stopping rule definition together with the fact that the algorithm did not stop at $T$ also allows us to prove that $Z_{h}^T(s,a) \leq \frac{144H^4\beta(T,\delta)}{p_{h}^{\min}(s,a)\epsilon^2}$ (see Lemma \ref{lem:bound-epsilon}). Plugging these two bounds into \eqref{eq:tau-target-trick} with $T = \tau-1$,
\begin{align}\label{eq:tau-implicit-bound}
    \tau \leq \sum_{h=1}^H \sum_{s\in\cS}\sum_{a\in\cA} \frac{144H^4\beta(\tau-1,\delta)}{p_{h}^{\min}(s,a) \max \{\widetilde{\Delta}_{h}(s,a),\epsilon \}^2} + 1.
\end{align}
The proof is concluded by noting that $\beta(\tau-1,\delta) \leq 5\log\frac{3SAH}{\delta} + 4S + 4S\log\left(\tau\right)$ (see Lemma \ref{lem:bound-beta}) and by using Lemma \ref{lem:simplify-ineq} to solve the resulting inequality in $\tau$ (see Appendix \ref{app:conclude-proof}). \hfill $\blacksquare$


\section{On the Regret-to-PAC Conversion}

In the minimax setting, the complexity of PAC RL and that of regret minimization are very related. Indeed, \cite{Jin18OptQL} suggest the following regret-to-PAC conversion: one can take a regret minimizer, run it for $T$ episodes, and output a policy $\widehat{\pi}$ uniformly drawn from the $T$ played. Then, by Markov's inequality, $\mathbb{P}\left(V_1^{\widehat{\pi}}(s_1) < V_1^\star(s_1) - \epsilon\right) \leq \tfrac{1}{T\epsilon}\sum_{t=1}^T \bE[ V_1^\star(s_1) - V_1^{\pi^t}(s_1)] = \frac{1}{\epsilon}\bE[\mathcal{R}_{\cM}(T)/T]$. Thus, choosing $T$ such that the expected average regret is smaller than $\epsilon\delta$ yields an $\epsilon$-optimal policy with probability $1-\delta$. This is why in the literature it is common to derive an upper bound $\overline{R}(T)$ on the expected average regret and then claim that the resulting sample complexity for PAC RL is $T_\epsilon := \inf_{T\in\mathbb{N}}\left\{T : \overline{R}(T) \leq \epsilon\delta \right\}$. However, this claim can be misleading. 

Applying this regret-to-PAC conversion to the UCBVI algorithm with Bernstein bonuses \citep{Azar17UCBVI}, we get a sample complexity of order $O(SAH^3\log(1/\delta)/(\epsilon^2 \delta^2))$, which is optimal in a minimax sense in all dependencies except $\delta$.\footnote{The dependence on $\delta$ can be improved to $\log(1/\delta)^2$, see Appendix F of \cite{Kaufmann21RFE}.} However, this trick can only be perfomed when $\overline{R}(T)$ contains quantities known by the algorithm (e.g., it can be a worst-case bound but not an instance-dependent one). In fact, the regret minimizer is used as a \emph{sampling rule} for PAC identification coupled with a \emph{deterministic stopping rule} which simply stops after $T_\epsilon$ episodes. When $T_\epsilon$ is unknown, we need to use an \emph{adaptive} stopping rule, in which case the claimed sample complexity $T_\epsilon$ might not be attainable. This is proved in the following theorem, where we show that there exist MDPs where $T_\epsilon$ can be exponentially (in $S,A$) smaller than the actual stopping time of any $(\epsilon,\delta)$-PAC algorithm.

\begin{theorem}\label{th:regret-vs-bpi}
For any $S \geq 4$, $A\geq 2$ and $H \geq \lceil\log_2(S)\rceil + 1$, there exists an MDP $\cM$ with $S$ states, $A$ actions, and horizon $H$, and a regret minimization algorithm such that
\begin{align*}
T_\epsilon := \inf_{T\in\mathbb{N}}\left\{T : \frac{1}{T}\sum_{t=1}^T \bE_{\cM}\left[ V_1^\star(s_1) - V_1^{\pi^t}(s_1) \right] \leq \epsilon\delta \right\} \leq
\frac{2}{\epsilon^2\delta}\left( 36\log(2SAH) + 16\log \frac{17}{\epsilon^2\delta} + 9\epsilon^2 \right) + 1.
\end{align*}
Moreover, on the same instance any $(\epsilon,\delta)$-PAC identification algorithm must satisfy
\begin{align*}
\bE_{\cM}[\tau] \geq \frac{SA\log(1/4\delta)}{16\epsilon^2}.
\end{align*}
\end{theorem}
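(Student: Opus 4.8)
I would fix once and for all a single instance $\cM$ and prove the two halves against it. The instance is a layered MDP whose first $\lceil\log_2 S\rceil$ stages form a near-complete binary tree using (essentially) all $S$ states: at each internal node two of the $A$ actions are navigation moves to the two children, and every other state--action pair --- every action at the $\approx S/2$ leaves, and the $A-2$ non-navigation actions at the $\approx S/2$ internal nodes --- is a \emph{terminal} pair that draws a $\mathrm{Bernoulli}$ reward and then routes the chain to an absorbing $0$-reward state for the remaining stages; this absorbing stage is exactly what the hypothesis $H\ge\lceil\log_2 S\rceil+1$ buys. In $\cM$ one distinguished terminal pair $(s^\star,a^\star)$ has mean $\tfrac12+\Delta$ and every other terminal pair has mean $\tfrac12$, with $\Delta>\epsilon$ chosen just above $\epsilon$; thus $V^\star_\cM(s_1)=\tfrac12+\Delta$, the policy reaching $(s^\star,a^\star)$ is optimal, and any policy playing another terminal action is $>\epsilon$-suboptimal. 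For each terminal pair $(s,a)\neq(s^\star,a^\star)$ I introduce a twin $\cM_{s,a}$, equal to $\cM$ except that the mean at $(s,a)$ is raised to $\tfrac12+\Delta+\gamma$ with $\gamma>\epsilon$, so that in $\cM_{s,a}$ the policy reaching $(s,a)$ becomes the \emph{unique} $\epsilon$-optimal policy.

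\textbf{Upper bound on $T_\epsilon$.} The sampling rule I would exhibit deliberately ignores the tree: it collapses the problem at $s_1$ to a two-armed bandit, arm $1$ being the fixed policy that navigates to $(s^\star,a^\star)$ and arm $2$ a fixed policy navigating to some other fixed terminal pair (mean $\tfrac12$), run with an anytime UCB/KL-UCB index at confidence level $\asymp 1/(SAHt)$. The usual gap-dependent bandit analysis gives expected cumulative regret on $\cM$ of $\overline R(t)\lesssim \log(SAHt)/\Delta$, so $\overline R(t)/t\le\epsilon\delta$ as soon as $t\gtrsim \log(SAHt)/(\Delta\epsilon\delta)$; substituting $\Delta\asymp\epsilon$ and solving this scalar log/linear inequality for $t$ (exactly as in Lemma~\ref{lem:simplify-ineq}) yields the stated bound $T_\epsilon\le \tfrac{2}{\epsilon^2\delta}\big(36\log(2SAH)+16\log\tfrac{17}{\epsilon^2\delta}+9\epsilon^2\big)+1$. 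This part is routine; the only care is propagating the confidence-level constants into the $\log(2SAH)$ and $\log(17/\epsilon^2\delta)$ terms. Note this rule makes a terrible PAC sampling rule (on $\cM_{s,a}$ for an unvisited $(s,a)$ it never learns about that pair), which is precisely why the ``regret-to-PAC'' conversion with an adaptive stopping rule breaks down.

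\textbf{Lower bound on $\bE_\cM[\tau]$.} This is the standard pure-exploration transportation argument. Fix an $(\epsilon,\delta)$-PAC algorithm; for a terminal pair $(s,a)\neq(s^\star,a^\star)$ let $N_{s,a}$ be the number of episodes in which it plays terminal action $a$ at $s$ (each episode plays exactly one terminal action, so $\tau=\sum_{(s,a)}N_{s,a}$), and let $\mathcal E_{s,a}$ be the event that the returned policy is the one reaching $(s,a)$. Since that policy is $>\epsilon$-suboptimal in $\cM$, correctness gives $\bP_\cM(\mathcal E_{s,a})\le\delta$; since it is uniquely $\epsilon$-optimal in $\cM_{s,a}$, $\bP_{\cM_{s,a}}(\mathcal E_{s,a})\ge1-\delta$. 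As $\cM$ and $\cM_{s,a}$ differ only in the reward law at $(s,a)$, the chain rule for KL-divergence (Wald's identity) gives $\mathrm{KL}(\bP_\cM,\bP_{\cM_{s,a}})=\bE_\cM[N_{s,a}]\,\mathrm{kl}(\tfrac12,\tfrac12+\Delta+\gamma)$ on the observations up to $\tau$, and Bretagnolle--Huber ($\bP_\cM(\mathcal E_{s,a})+\bP_{\cM_{s,a}}(\mathcal E_{s,a}^c)\ge\tfrac12 e^{-\mathrm{KL}}$) turns the two displayed bounds into $\bE_\cM[N_{s,a}]\,\mathrm{kl}(\tfrac12,\tfrac12+\Delta+\gamma)\ge\log\tfrac1{4\delta}$. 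Summing over the $\ge SA/2$ such terminal pairs and using $\mathrm{kl}(\tfrac12,\tfrac12+\Delta+\gamma)\le 8\epsilon^2$ (for $\Delta,\gamma$ just above $\epsilon$) gives $\bE_\cM[\tau]\ge \tfrac{SA}{16\epsilon^2}\log\tfrac1{4\delta}$.

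\textbf{Main obstacle.} The delicate step is the joint calibration of $\Delta,\gamma$ with the tree geometry: both must be large enough ($>\epsilon$) that every non-$(s^\star,a^\star)$ policy is genuinely $>\epsilon$-suboptimal in $\cM$ --- which is what yields the clean $\log\tfrac1{4\delta}$ rather than a $\tfrac12\log\tfrac1{2.4\delta}$ with a probability-$\tfrac12$ loss --- and that the twin makes $(s,a)$ \emph{uniquely} $\epsilon$-optimal, yet small enough that $\mathrm{kl}(\tfrac12,\tfrac12+\Delta+\gamma)\le 8\epsilon^2$; reconciling this with the fact that the tree packs only $\approx SA/2$ terminal pairs into $S$ states (and with $\lceil\log_2 S\rceil$ not generally a power of two, $S$ possibly odd, and $A=2$ a boundary case) is what fixes the absolute constant $16$ and the hypotheses $S\ge4$, $A\ge2$. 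The conceptual point underneath is that the very feature letting the sampling rule avoid any $S$- or $A$-dependence in its regret on $\cM$ --- lumping all $\Omega(SA)$ suboptimal choices into a single arm --- is exactly what makes it, and any no-regret rule, unable to \emph{certify} $\epsilon$-optimality without an adaptive stopping rule; the lower bound measures the price of that certification, and that $T_\epsilon$ is oblivious to it is the content of the theorem.
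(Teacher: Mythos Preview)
Your construction inverts the key idea, and as a result the upper-bound half does not go through with any genuine regret minimizer. In your MDP there is a \emph{single} optimal terminal pair $(s^\star,a^\star)$ and $\Omega(SA)$ sub-optimal ones, each with gap $\Delta\approx\epsilon$. This is a hard instance for regret minimization too: a standard algorithm such as UCBVI must distinguish $(s^\star,a^\star)$ from every competitor and incurs expected regret $\Theta(SA\log T/\Delta)$, which forces $T_\epsilon=\Omega\big(SA/(\epsilon^2\delta)\big)$ --- no smaller than your PAC lower bound, so the separation disappears. You patch this by exhibiting a two-armed rule whose ``arm 1'' is hard-coded to reach $(s^\star,a^\star)$; but that rule either knows the instance (in which case the theorem is vacuous: always playing $\pi^\star$ gives $T_\epsilon=1$) or has linear regret on every MDP whose optimum is not one of its two fixed policies, so it is not a regret minimization algorithm in any sense relevant to the regret-to-PAC conversion the theorem is about.

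The paper's construction does the opposite. \emph{Every} path through the tree up to stage $H-1$ is optimal (all rewards there are zero), all paths funnel into a single state at stage $H$, and only the final two-action choice carries any regret. Hence on this instance UCBVI's regret is literally that of a two-armed bandit with gap $\Delta=\epsilon$, yielding the claimed $SA$-free bound on $T_\epsilon$ for a bona fide regret minimizer. The PAC lower bound then comes not from distinguishing sub-optimal arms but from \emph{certifying} that the $\Omega(SA)$ zero-reward state-action pairs at stages $1,\dots,H-1$ are indeed within $\epsilon$ of optimal; the paper invokes the deterministic-MDP lower bound of \cite{tirinzoni22deterministic} to conclude each must be sampled $\Omega(1/\epsilon^2)$ times. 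So the conceptual point is the reverse of what you wrote: what makes regret easy here is the abundance of \emph{optimal} actions (the regret minimizer need not tell them apart), and what makes PAC hard is that all of them must nonetheless be verified --- precisely the $|\mathrm{OPT}(\epsilon)|/\epsilon^2$ phenomenon the paper highlights. Your change-of-measure machinery is fine in spirit, but it needs to be applied to perturbations that raise a currently-optimal zero-reward pair, not a currently-suboptimal one.
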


Our proof (see Appendix \ref{app:regret-vs-pac}) essentially builds an MDP instance with many optimal actions. The intuition is that, in such MDP, it is relatively easy for a regret minimizer to start behaving near optimally (i.e., to have average regret below $\epsilon\delta$). However, when this occurs the regret minimizer has still not enough confidence to produce an $\epsilon$-optimal policy with probability at least $1-\delta$. That is, a stopping rule for identification would not trigger, hence the separation between the two times. 

The main implication is that the time $T_\epsilon$ at which the average regret goes below $\epsilon\delta$ is not always a good proxy for the sample complexity that a regret minimizer would take for $(\epsilon,\delta)$-PAC identification. In particular, one cannot simply take an existing instance-dependent regret bound \citep[e.g.,][]{simchowitz2019non,dann21ReturnGap,xu2021fine} and turn it into a sample complexity bound by the regret-to-PAC conversion suggested above. A specific analysis for the PAC setting, like the one we propose in Section \ref{sec:analysis} or those of \cite{wagenmaker21IDPAC,tirinzoni22deterministic}, is actually needed.

Finally, we note that this result also solves an open question left by \cite{wagenmaker21IDPAC} in their conclusion. First, it shows that the sample complexity stated in Equation (7.1) of \cite{wagenmaker21IDPAC} for a regret-to-PAC conversion from an instance-dependent regret bound cannot always be attained by a PAC RL algorithm. Second, it shows that the extra term $|\mathrm{OPT}(\epsilon)|/\epsilon^2$ that appears in the complexity of MOCA is actually tight, at least in a worst-case sense, as our proof essentially builds an MDP where all $\epsilon$-optimal state-action pairs must be visited $\Omega(1/\epsilon^2)$ times.


\section{Discussion}

We derived the first instance-dependent sample complexity bound for an optimistic sampling rule (BPI-UCRL). It features a new notion of sub-optimality gap that we call ``conditional return gap'' and that is tighter than existing value gaps and (deterministic) return gaps. We proved this bound with a remarkably simple analysis based on a new ``target trick'' that could be of independent interest. We complemented this result by showing that one cannot directly leverage the standard regret-to-PAC conversion in the instance-dependent regime, thus making our novel analysis non-trivial.

In the bandit setting, it is known that optimism, when coupled with an appropriate stopping and recommendation rule, is near instance-optimal for best-arm identification with (sub)Gaussian distributions \citep{jamieson14LILUCB}. In this work, we obtained a similar result for deterministic MDPs, where optimistic sampling rules are sub-optimal only by a factor $H^3$. This also explains the good empirical performance of BPI-UCRL observed by \cite{tirinzoni22deterministic} in such a setting. However, there seems to be a large gap for general stochastic MDPs, where our sample complexity scales with some minimal visitation probabilities that are avoided by algorithms like MOCA. This can be related to known results for structured bandits \citep{lattimore2017end}, as a stochastic MDP presents a complex trade-off between collecting rewards and gathering information (i.e., exploring the state space) for which an optimistic algorithm can be arbitrarily sub-optimal.

Finding the right complexity (matching upper and lower bounds) for PAC RL in general stochastic MDPs remains the main open problem. In deterministic MDPs, upper and lower bounds are nearly matching and are expressed as (complex) functions of the (simple) deterministic return gaps \citep{tirinzoni22deterministic}. They were obtained by properly combining a coverage-based exploration strategy with a suitable elimination rule. We conjecture that a similar algorithmic design could be a good direction towards instance optimality in stochastic MDPs. This would involve the combination of (1) a coverage-based exploration strategy like MOCA \citep{wagenmaker21IDPAC} that ensures scaling with the ``right'' visitation probabilities, and (2) some elimination rule to avoid over-sampling that ensures scaling with the ``right'' notion of gap.
Unfortunately, an instance-dependent lower bound for the general setting is still unknown and thus it remains unclear what these ``right'' notions are. In this work, we take a step forward by proposing a novel and tighter gap definition, though it remains an open question whether our conditional return gaps can be related to an actual sample complexity lower bound.




\vskip 0.2in
\bibliography{biblio_bpi}

\newpage

\appendix 


\section{Proofs of Section \ref{sec:analysis}}\label{app:analysis}

\subsection{Additional notation}

We define the following upper and lower confidence bounds over the value functions of each policy $\pi$. We initialize $\underline{V}_{H+1}^{t,\pi}(s) = \overline{V}_{H+1}^{t,\pi}(s) = 0$, then we define recursively
\begin{align*}
  \overline{Q}_h^{t,\pi}(s,a) &= \min\bigg(H-h+1,\ \widehat{r}_h^t(s,a) + b_h^t(s,a) + \sum_{s' \in \cS}\widehat{p}_h^t(s' |s,a) \overline{V}_{h+1}^{t,\pi}(s')\bigg), \quad
  &&\overline{V}_h^{t,\pi}(s) = \overline{Q}_h^{t,\pi}(s,\pi_h(s)),\\
  \underline{Q}_h^{t,\pi}(s,a) &= \max\bigg(0,\ \widehat{r}_h^t(s,a) - b_h^t(s,a) + \sum_{s' \in \cS}\widehat{p}_h^t(s' |s,a) \underline{V}_{h+1}^{t,\pi}(s')\bigg), \quad
  &&\underline{V}_h^{t,\pi}(s) = \underline{Q}_h^{t,\pi}(s,\pi_h(s)).
\end{align*}

\subsection{Proof of Lemma \ref{lem:policy-value-gap-vs-conf}}

Using event $\cE$ and the fact that $\pi^{t+1}$ is greedy w.r.t. $\overline{Q}_h^{t}(s,a)$,
    \begin{align*}
        V_h^\star(s) - V_h^{\pi^{t+1}}(s) = \max_{a\in\cA} Q_h^\star(s,a) - Q_h^{\pi^{t+1}}(s,\pi^{t+1}_h(s)) &\leq \max_{a\in\cA} \overline{Q}_h^{t}(s,a) - Q_h^{\pi^{t+1}}(s,\pi^{t+1}_h(s))
        \\ &= \overline{Q}_h^{t,\pi^{t+1}}(s,\pi_h^{t+1}(s)) - Q_h^{\pi^{t+1}}(s,\pi^{t+1}_h(s)).
    \end{align*}
Let $a = \pi^{t+1}_h(s)$. Expanding the last quantity using the Bellman equations,
    \begin{align*}
        \overline{Q}_h^{t,\pi^{t+1}}(s,a) - Q_h^{\pi^{t+1}}(s,a) 
        &\leq \widehat{r}_h^t(s,a) - r_h(s,a) + \sum_{s'\in\cS} (\widehat{p}_h^t(s'|s,a)-p_h(s'|s,a))\overline{V}_{h+1}^{t,\pi^{t+1}}(s')
        \\ & \qquad\qquad\qquad\qquad\quad + \sum_{s'\in\cS} p_h(s'|s,a) \Big(\overline{V}_{h+1}^{t,\pi^{t+1}}(s') - V_{h+1}^{\pi^{t+1}}(s')\Big) + b_h^t(s,a)
        \\ &\leq \sqrt{\frac{\beta^{r}(n_h^t(s,a),\delta)}{n_h^{t}(s,a)\vee 1}} \wedge 1 + (H-h)\sqrt{\frac{2\beta^{p}(n_h^t(s,a),\delta)}{n_h^{t}(s,a)\vee 1}} \wedge (H-h)
        \\ & \qquad\qquad\qquad\qquad\quad + \sum_{s'\in\cS} p_h(s'|s,a) \Big(\overline{V}_{h+1}^{t,\pi^{t+1}}(s') - V_{h+1}^{\pi^{t+1}}(s')\Big) + b_h^t(s,a)
        \\ &\leq 2b_h^t(s,a) + \sum_{s'\in\cS} p_h(s'|s,a) \Big(\overline{V}_{h+1}^{t,\pi^{t+1}}(s') - V_{h+1}^{\pi^{t+1}}(s')\Big),
    \end{align*}
where in the second inequality we used event $\cE$ as in the proof of Lemma \ref{lem:conf-Q}. The statement follows by recursively applying this reasoning to $\overline{V}_{h+1}^{t,\pi^{t+1}}(s') - V_{h+1}^{\pi^{t+1}}(s') = \overline{Q}_{h+1}^{t,\pi^{t+1}}(s',\pi_{h+1}^{t+1}(s')) - Q_{h+1}^{\pi^{t+1}}(s',\pi_{h+1}^{t+1}(s'))$. \hfill $\blacksquare$

\subsection{Other results}

\begin{lemma}\label{lem:good-event} Using the threshold $\beta$ defined in Theorem \ref{th:main-sample-complexity}, $\bP(\cE)\geq 1 - \delta$.
\end{lemma}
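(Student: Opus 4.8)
The plan is to bound the failure probability of each of the three events $\cE^r$, $\cE^p$, $\cE^c$ separately and then apply a union bound, asking each to fail with probability at most $\delta/3$. The statement is really a bookkeeping exercise built on standard time-uniform concentration inequalities applied to a self-normalized / martingale structure, so I would not expect any genuinely new idea here; the content is choosing the thresholds $\beta^r,\beta^p$ and the constant in $\cE^c$ to make the numbers line up.

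First, for $\cE^r$: for each fixed triplet $(s,a,h)$, the rewards $r_h^j$ collected at successive visits of $(s,a,h)$ form a sequence of independent $[0,1]$-valued (hence $1/4$-subgaussian, or just use Hoeffding) random variables, and $\widehat r_h^t(s,a)$ is their running empirical mean with $n_h^t(s,a)$ terms. A peeling / method-of-mixtures argument (e.g.\ Lemma~7 of \cite{Kaufmann21RFE} or the time-uniform Hoeffding bound used there) gives that, with probability at least $1-\delta'$, simultaneously over all $t$, $|r_h(s,a)-\widehat r_h^t(s,a)|\le \sqrt{\tfrac{\tfrac12(\log(1/\delta')+\log(e(1+n_h^t(s,a))))}{n_h^t(s,a)\vee 1}}$; taking $\delta' = \delta/(3SAH)$ and union-bounding over the $SAH$ triplets yields exactly the definition of $\beta^r$ and $\bP((\cE^r)^c)\le \delta/3$. (The $n_h^t(s,a)\vee 1$ is just so the bound is vacuously true, $\le$ something trivial, before the first visit.)

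Second, for $\cE^p$: again fix $(s,a,h)$ and consider the sequence of next-states observed at visits of $(s,a,h)$; these are i.i.d.\ from $p_h(\cdot|s,a)$, a distribution on $S$ outcomes. The relevant tool is the time-uniform bound on the KL divergence between an empirical categorical distribution and its mean — the standard statement (e.g.\ the one invoked in \cite{Kaufmann21RFE}, going back to information-theoretic concentration for multinomials) is that with probability $1-\delta'$, for all $t$, $n_h^t(s,a)\,\mathrm{KL}(\widehat p_h^t(\cdot|s,a),p_h(\cdot|s,a)) \le \log(1/\delta') + (S-1)\log(e(1+n_h^t(s,a)/(S-1)))$. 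Dividing by $n_h^t(s,a)\vee 1$, setting $\delta'=\delta/(3SAH)$, and union-bounding over triplets gives $\beta^p$ as defined and $\bP((\cE^p)^c)\le\delta/3$.

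Third, for $\cE^c$: this relates the true counts $n_h^t(s,a)=\sum_{j\le t}\indi{s_h^j=s,a_h^j=a}$ to the pseudo-counts $\overline n_h^t(s,a)=\sum_{j\le t} p_h^{\pi^j}(s,a)$. For fixed $(s,a,h)$, $M_t := n_h^t(s,a)-\overline n_h^t(s,a)$ is a martingale with increments in $[-1,1]$ w.r.t.\ the natural filtration (since $\pi^j$ is measurable w.r.t.\ the history before episode $j$, and $\bE[\indi{s_h^j=s,a_h^j=a}\mid\cF_{j-1}]=p_h^{\pi^j}(s,a)$), with conditional variances bounded by $p_h^{\pi^j}(s,a)$. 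A Freedman-type inequality then gives, with probability $1-\delta'$ uniformly over $t$, $n_h^t(s,a) \ge \tfrac12\overline n_h^t(s,a) - \log(1/\delta')$; this is exactly the multiplicative Chernoff / Bernstein bound used e.g.\ in \cite{Menard21RFE,Kaufmann21RFE}. Again union-bounding over the $SAH$ triplets with $\delta'=\delta/(3SAH)$ gives $\bP((\cE^c)^c)\le\delta/3$.

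Putting these together, $\bP(\cE) = \bP(\cE^r\cap\cE^p\cap\cE^c) \ge 1 - \delta/3 - \delta/3 - \delta/3 = 1-\delta$, which is the claim. The only mild subtlety — the ``main obstacle'', such as it is — is making sure the three concentration lemmas are stated in precisely the time-uniform form with exactly the $\log(e(1+t))$-type iterated-logarithm corrections appearing in $\beta^r$ and $\beta^p$, and citing them at the right level of generality (i.e.\ that they hold simultaneously over all $t\in\mathbb N_{>0}$ and are applied at the random number of visits $n_h^t(s,a)$ rather than a deterministic sample size, which is the standard stopped-martingale / Doob argument); everything else is routine union bounds and constant chasing. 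I would reference the corresponding lemmas in \cite{Kaufmann21RFE} (whose good event is identical) rather than reproving them from scratch.
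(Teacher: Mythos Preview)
Your proposal is correct and follows essentially the same route as the paper: bound the failure probability of each of $\cE^r$, $\cE^p$, $\cE^c$ by $\delta/3$ via the appropriate time-uniform concentration lemma plus a union bound over the $SAH$ triplets, then union bound over the three events. The paper simply cites Propositions~1 and~2 of \cite{MDPGapE} for $\cE^r,\cE^p$ and Lemma~F.4 of \cite{dann2017unifying} for $\cE^c$ rather than the references you chose, but the content is the same.
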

\begin{proof}
    $\cE^r$ and $\cE^p$ hold with probability at least $1-\delta/3$ each by applying Proposition 1 and 2 of \cite{MDPGapE} together with a union bound and Pinsker's inequality for the rewards. $\cE^c$ holds with probability at least $1-3\delta$ by Lemma F.4 of \cite{dann2017unifying} and a union bound. Another union bound over the three events proves the statement.
\end{proof}

\begin{lemma}\label{lem:conf-Q}
    Using the threshold $\beta$ defined in Theorem \ref{th:main-sample-complexity}, under event $\cE$, for any $t\in\mathbb{N}_{>0}$, $s\in\cS$, $a\in\cA$, $h\in[H]$,
    \begin{align*}
        \underline{Q}_h^{t,\pi}(s,a) &\leq {Q}_h^{\pi}(s,a) \leq \overline{Q}_h^{t,\pi}(s,a),
        \\ \underline{Q}_h^{t}(s,a) &\leq {Q}_h^{\star}(s,a) \leq \overline{Q}_h^{t}(s,a).
    \end{align*}
\end{lemma}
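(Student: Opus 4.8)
The plan is to prove both pairs of inequalities simultaneously by backward induction on the stage $h$, from $h=H+1$ down to $h=1$, holding the episode index $t$ fixed throughout and working on the good event $\cE$. The base case $h=H+1$ is immediate since all four value/Q quantities ($\overline{Q}_{H+1}^{t}$, $\underline{Q}_{H+1}^{t}$, $\overline{Q}_{H+1}^{t,\pi}$, $\underline{Q}_{H+1}^{t,\pi}$) as well as $Q_{H+1}^{\star}$ and $Q_{H+1}^{\pi}$ are zero by convention. For the inductive step, assume the two-sided bounds hold at stage $h+1$ for all states (and, for the policy version, for the value functions $\overline{V}_{h+1}^{t,\pi}, \underline{V}_{h+1}^{t,\pi}$, which follow from the Q-bounds by evaluating at $a=\pi_{h+1}(s)$; for the optimal version one uses $\overline{V}_{h+1}^{t}(s)=\max_a \overline{Q}_{h+1}^{t}(s,a)\geq \max_a Q_{h+1}^\star(s,a)=V_{h+1}^\star(s)$ and similarly $\underline{V}_{h+1}^{t}(s)\leq V_{h+1}^{\star}(s)$, the latter because $\underline{Q}_{h+1}^{t}(s,a)\leq Q_{h+1}^\star(s,a)\leq V_{h+1}^\star(s)$ for every $a$).

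For the upper bound at stage $h$, write, using the Bellman equation for $Q_h^\pi$ (or $Q_h^\star$) and the definition of $\overline{Q}_h^{t,\pi}$ (resp.\ $\overline{Q}_h^{t}$): if the minimum in the definition is achieved by the clipping term $H-h+1$, we are done since $Q_h^\pi(s,a)\le H-h+1$ trivially; otherwise
\[
\overline{Q}_h^{t,\pi}(s,a) - Q_h^{\pi}(s,a)
= \big(\widehat{r}_h^t - r_h\big)(s,a) + b_h^t(s,a) + \sum_{s'}\big(\widehat{p}_h^t - p_h\big)(s'|s,a)\,\overline{V}_{h+1}^{t,\pi}(s') + \sum_{s'} p_h(s'|s,a)\big(\overline{V}_{h+1}^{t,\pi} - V_{h+1}^{\pi}\big)(s').
\]
The last sum is nonnegative by the induction hypothesis. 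For the first three terms, under $\cE^r$ we have $|\widehat{r}_h^t-r_h|(s,a)\le \sqrt{\beta^r(n_h^t(s,a),\delta)/(n_h^t(s,a)\vee 1)}$, and under $\cE^p$, combining $\mathrm{KL}(\widehat p_h^t,p_h)\le \beta^p/(n_h^t\vee1)$ with Pinsker's inequality and $0\le \overline{V}_{h+1}^{t,\pi}\le H-h$, we get $\big|\sum_{s'}(\widehat p_h^t-p_h)(s'|s,a)\overline{V}_{h+1}^{t,\pi}(s')\big|\le (H-h)\sqrt{2\beta^p(n_h^t(s,a),\delta)/(n_h^t(s,a)\vee 1)}$. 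Hence the sum of the error terms is bounded by $\sqrt{\beta^r/(n\vee1)} + (H-h)\sqrt{2\beta^p/(n\vee1)} \le (H-h+1)\big(\sqrt{\beta^r}+\sqrt{2\beta^p}\big)\sqrt{1/(n\vee1)}$, and since $\beta(n,\delta)=(\sqrt{\beta^r(n,\delta)}+\sqrt{2\beta^p(n,\delta)})^2$ and $b_h^t(s,a)=(H-h+1)(\sqrt{\beta(n_h^t(s,a),\delta)/n_h^t(s,a)}\wedge 1)$, this is exactly $\le b_h^t(s,a)$ (treating the $n_h^t(s,a)=0$ case via the clipping to $H-h+1$). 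Therefore $\overline{Q}_h^{t,\pi}(s,a)-Q_h^\pi(s,a)\le 2b_h^t(s,a)-b_h^t(s,a)\ge 0$... more precisely the error terms plus $b_h^t$ give a nonnegative total, so $\overline{Q}_h^{t,\pi}(s,a)\ge Q_h^\pi(s,a)$. The lower bound is entirely symmetric, replacing $+b_h^t$ by $-b_h^t$ and using that $\underline{V}_{h+1}^{t,\pi}\le V_{h+1}^\pi$ makes the transition-difference sum nonpositive, while the clipping to $0$ only helps since $Q_h^\pi\ge 0$. The optimal-value statements are handled in the same step, using the $\max_a$ relations between $\overline{V}_{h+1}^{t}/\underline{V}_{h+1}^{t}$ and $V_{h+1}^\star$ noted above so that the greedy structure is respected.

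The only mildly delicate point — which I would highlight as the ``main obstacle'' — is the careful bookkeeping of the clipping operations ($\min(H-h+1,\cdot)$ and $\max(0,\cdot)$) and the $n_h^t(s,a)=0$ edge case: one must check that clipping never destroys the inequalities (it only pushes the estimates toward the true values' known range $[0,H-h+1]$, and $Q_h^\pi,Q_h^\star$ themselves lie in that range), and that when a pair has never been visited the bonus $b_h^t$ equals its clipped value $H-h+1$, which still dominates the (then vacuous) concentration error. Everything else is a routine propagation of the Pinsker/Hoeffding bounds exactly as in the displayed computation in the proof of Lemma~\ref{lem:policy-value-gap-vs-conf}, so the write-up is short once the induction is set up cleanly.
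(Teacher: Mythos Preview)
Your proposal is correct and follows essentially the same approach as the paper: backward induction on $h$, a Bellman-style decomposition of the difference between the confidence bound and the true Q-value, Pinsker's inequality under $\cE^p$ to control the transition error by $(H-h)\sqrt{2\beta^p/(n\vee 1)}$, and the observation that the reward and transition errors together are dominated by the bonus $b_h^t(s,a)$. The only cosmetic difference is that the paper anchors the induction at $h=H$ (proving it directly from $\cE^r$) rather than at the trivial $h=H+1$, and it writes the chain of inequalities in the direction $Q_h^\pi \leq \cdots \leq \overline{Q}_h^{t,\pi}$ rather than bounding the difference from below; your treatment of the clipping and unvisited-pair edge cases is slightly more explicit than the paper's.
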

\begin{proof}
    Clearly, all inequalities hold at stage $H$ since ${Q}_H^{\pi}(s,a) = {Q}_H^{\star}(s,a) = r_H(s,a)$ and, by event $\cE^r$, together with the fact that rewards are bounded in $[0,1]$,
    \begin{align*}
        \Big| r_H(s,a) - \widehat{r}^{t}_H(s,a) \Big| \leq \sqrt{\frac{\beta^{r}(n_h^t(s,a),\delta)}{n_h^{t}(s,a)\vee 1}} \wedge 1 \leq b_H^t(s,a).
    \end{align*}
    Now suppose the inequalities hold at stage $h+1 \leq H$. At stage $h$, we have
    \begin{align*}
        Q_h^\pi(s,a) &= r_h(s,a) + \sum_{s'\in\cS} p_h(s'|s,a)V_{h+1}^\pi(s')
        \\ &\stackrel{(a)}{\leq} r_h(s,a) + \sum_{s'\in\cS} p_h(s'|s,a)\overline{V}_{h+1}^{t,\pi}(s')
        \\ &\stackrel{(b)}{\leq} r_h(s,a) + \sum_{s'\in\cS} \widehat{p}_h^t(s'|s,a)\overline{V}_{h+1}^{t,\pi}(s') + (H-h)\| p_h(s,a) - \widehat{p}_h^t(s,a) \|_1
        \\ &\stackrel{(c)}{\leq} r_h(s,a) + \sum_{s'\in\cS} \widehat{p}_h^t(s'|s,a)\overline{V}_{h+1}^{t,\pi}(s') + (H-h)\sqrt{2\mathrm{KL}(\widehat{p}_h^t(s,a),p_h(s,a))}
        \\ &\stackrel{(d)}{\leq} \widehat{r}_h^t(s,a) + \sqrt{\frac{\beta^{r}(n_h^t(s,a),\delta)}{n_h^{t}(s,a)\vee 1}} + \sum_{s'\in\cS} \widehat{p}_h^t(s'|s,a)\overline{V}_{h+1}^{t,\pi}(s') + (H-h)\sqrt{\frac{2\beta^{p}(n_h^t(s,a),\delta)}{n_h^{t}(s,a)\vee 1}},
    \end{align*}
    where (a) is by assumption, (b) uses that $\overline{V}_{h+1}^{t,\pi}(s')$ is bounded by $H-h$, (c) is from Pinsker's inequality, and (d) uses the event $\cE$. As before, since rewards are bounded in $[0,1]$ and $\sum_{s'\in\cS} (p_h(s'|s,a) - \widehat{p}_h^t(s'|s,a))\overline{V}_{h+1}^{t,\pi}(s') \leq H-h$, we can clip the two bonuses above to $1$ and $H-h$, respectively. This implies that,
    \begin{align*}
        \sqrt{\frac{\beta^{r}(n_h^t(s,a),\delta)}{n_h^{t}(s,a)\vee 1}} \wedge 1 &+ (H-h)\sqrt{\frac{2\beta^{p}(n_h^t(s,a),\delta)}{n_h^{t}(s,a)\vee 1}} \wedge (H-h)
        \\ & \leq \sqrt{\frac{\beta(n_h^t(s,a),\delta)}{n_h^{t}(s,a)\vee 1}} \wedge 1 + (H-h)\sqrt{\frac{\beta(n_h^t(s,a),\delta)}{n_h^{t}(s,a)\vee 1}} \wedge (H-h)
        \\ &\leq (H-h+1)\sqrt{\frac{\beta(n_h^t(s,a),\delta)}{n_h^{t}(s,a)\vee 1}} \wedge (H-h+1) \leq b_h^t(s,a).
    \end{align*}
    This proves that $Q_h^\pi(s,a) \leq \overline{Q}_h^{t,\pi}(s,a)$. The proofs of all other inequalities follow analogously.
\end{proof}


\begin{lemma}\label{lem:upper-bound-stopping} Under event $\cE$, for all $(s,a)$ and $h \leq H$, 
 \begin{align*}
        \max_{a} \overline Q_1^{t}(s_1,a) - \max_{a} \underline Q_1^{t}(s_1,a) \leq 3\sum_{h=1}^H\sum_{s,a} p_h^{{\pi}^{t+1}}(s,a)b_h^t(s,a),
    \end{align*}
\end{lemma}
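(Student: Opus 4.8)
The plan is to bound the optimism gap at the root, $\max_a \overline{Q}_1^t(s_1,a) - \max_a \underline{Q}_1^t(s_1,a)$, by first splitting it into the two one-sided deviations $\overline{Q}_1^t(s_1,\cdot)-Q_1^\star(s_1,\cdot)$ and $Q_1^\star(s_1,\cdot)-\underline{Q}_1^t(s_1,\cdot)$ and then controlling each along the trajectory of the \emph{next} policy $\pi^{t+1}$. Concretely, let $\pi^{t+1}$ be the greedy policy w.r.t.\ $\overline{Q}^t$ and write $\overline{\pi} \in \argmax_a \underline{Q}_1^t(s_1,a)$. Since $\pi^{t+1}$ is greedy for the upper bounds, $\max_a \overline{Q}_1^t(s_1,a) = \overline{V}_1^{t,\pi^{t+1}}(s_1)$, and on the other side $\max_a \underline{Q}_1^t(s_1,a) = \underline{V}_1^{t,\overline{\pi}}(s_1) \geq \underline{V}_1^{t,\pi^{t+1}}(s_1)$ because $\overline{\pi}$ maximizes the lower Q-bound at $s_1$ while $\pi^{t+1}$ is just some policy. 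Hence
\[
\max_a \overline{Q}_1^t(s_1,a) - \max_a \underline{Q}_1^t(s_1,a) \leq \overline{V}_1^{t,\pi^{t+1}}(s_1) - \underline{V}_1^{t,\pi^{t+1}}(s_1).
\]
This reduces the problem to bounding the gap between the per-policy upper and lower value bounds, evaluated at the single policy $\pi^{t+1}$ — which is exactly the kind of quantity that telescopes nicely.

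Next I would expand $\overline{V}_1^{t,\pi^{t+1}}(s_1) - \underline{V}_1^{t,\pi^{t+1}}(s_1)$ by the recursive (Bellman-type) definitions of $\overline{Q}^{t,\pi}$ and $\underline{Q}^{t,\pi}$ from the appendix's additional notation. Ignoring the clipping for a moment (the clips only help, since both true values $Q^\star$ and $Q^\pi$ lie in the clipped range under $\cE$), at stage $h$ along $\pi := \pi^{t+1}$ we get
\[
\overline{Q}_h^{t,\pi}(s,a) - \underline{Q}_h^{t,\pi}(s,a) \leq 2 b_h^t(s,a) + \sum_{s'} \widehat{p}_h^t(s'|s,a)\big(\overline{V}_{h+1}^{t,\pi}(s') - \underline{V}_{h+1}^{t,\pi}(s')\big),
\]
with $a = \pi_h(s)$. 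Now I want to convert the recursion over the \emph{empirical} transitions $\widehat{p}_h^t$ into one over the \emph{true} transitions $p_h^{\pi^{t+1}}$, since the claimed bound is stated with $p_h^{\pi^{t+1}}(s,a)$. Unrolling the recursion over $\widehat{p}^t$ gives a sum of $2b_\ell^t(s',a')$ weighted by $\widehat{p}$-reachability probabilities under $\pi^{t+1}$; each such swap from $\widehat{p}^t$ to $p$ at a given edge costs an extra $(H-\ell)\|\widehat{p}_h^t(s,a)-p_h(s,a)\|_1 \leq (H-\ell)\sqrt{2\mathrm{KL}} \leq$ (another multiple of $b_h^t(s,a)$) by Pinsker and event $\cE^p$, exactly as in Lemma~\ref{lem:conf-Q}. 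Absorbing these into the bonus terms is what turns the factor $2$ into $3$: we end up with at most $3\sum_{h=1}^H \sum_{s,a} p_h^{\pi^{t+1}}(s,a) b_h^t(s,a)$ after re-indexing the unrolled sum as an expectation over the true trajectory distribution of $\pi^{t+1}$. (An alternative, cleaner route avoiding the empirical-to-true swap edge-by-edge: bound $\overline{V}^{t,\pi}-\underline{V}^{t,\pi} \leq (\overline{V}^{t,\pi}-V^\pi) + (V^\pi - \underline{V}^{t,\pi})$ and apply a Lemma~\ref{lem:policy-value-gap-vs-conf}-style one-sided bound to each piece; but Lemma~\ref{lem:policy-value-gap-vs-conf} is stated for $V^\star - V^{\pi^{t+1}}$, not $\overline{V}^{t,\pi}-V^\pi$, so I would in fact just prove the one-sided analogue inline — it is the same computation as in the proof of Lemma~\ref{lem:policy-value-gap-vs-conf} up to replacing $Q^\star$ by $\overline{Q}^{t,\pi}$ on the left.)

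The main obstacle is the accounting that produces the constant $3$ rather than $2$: one must be careful that the bonus-swap overhead from moving $\widehat{p}_h^t$ to $p_h$ is itself dominated by a constant times $b_h^t(s,a)$ and does not accumulate an extra factor of $H$ when unrolled — this works because $b_h^t(s,a)$ already carries the $(H-h+1)$ factor, so $(H-h)\sqrt{2\beta^p/n} \leq b_h^t(s,a)$ termwise, matching the bonus clipping argument in Lemma~\ref{lem:conf-Q}. A secondary subtlety is handling the $\min/\max$ clips in the definitions of $\overline{Q}^{t,\pi}$ and $\underline{Q}^{t,\pi}$: since under $\cE$ we have $\underline{Q}_h^{t,\pi} \leq Q_h^\pi \leq \overline{Q}_h^{t,\pi}$ and both are clipped to $[0,H-h+1]$ which contains $Q_h^\pi$, the clipped difference is never larger than the unclipped one, so every inequality above survives clipping. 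Once these are in place the recursion telescopes directly to the stated bound. $\blacksquare$
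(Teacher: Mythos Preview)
Your proposal is correct and follows essentially the same approach as the paper: reduce the root gap to a difference of upper and lower confidence values along the greedy policy $\pi^{t+1}$, expand via the Bellman recursions to get $2b_h^t$ per step under $\widehat{p}_h^t$, then swap $\widehat{p}_h^t \to p_h$ at each step using Pinsker and $\cE^p$ (the swap cost is absorbed as one extra $b_h^t$), yielding the factor $3$. The only cosmetic difference is that the paper works directly with the global diameter $D_h^t(s,a):=\overline{Q}_h^t(s,a)-\underline{Q}_h^t(s,a)$ and the observation $\max_a \overline{Q}_1^t - \max_a \underline{Q}_1^t \leq D_1^t(s_1,\pi_1^{t+1}(s_1))$, whereas you route through the per-policy bounds $\overline{V}^{t,\pi^{t+1}},\underline{V}^{t,\pi^{t+1}}$; along the greedy policy these coincide, so the recursions and the final bound are identical.
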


\begin{proof} By definition of the optimistic rule, we first observe that 
 \[    \max_{a} \overline Q_1^{t}(s_1,a) - \max_{a} \underline Q_1^{t}(s_1,a) =     \overline Q_1^{t}(s_1,\pi_h^{t+1}(s_1) - \max_{a} \underline Q_1^{t}(s_1,a) \leq D_1^{t}(s_1,\pi_1^{t+1}(s_1)) \]
where we introduce the diameters
\[D_h^{t}(s,a) :=  \overline Q_h^{t}(s,a) - \underline Q_h^{t}(s,a).\]
Using the inductive definition of the confidence bounds, we get 
\begin{eqnarray*}D_h(s,a) & \leq & 2b_h^{t}(s,a) + \sum_{s' \in \cS} \widehat{p}_h^{t}(s' | s,a) \left(\max_{a}\overline Q_{h+1}^{t}(s,a) - \max_{a}\underline Q_{h+1}^{t}(s,a)\right) \\ 
& \leq & 2b_h^{t}(s,a) + \sum_{s' \in \cS} \widehat{p}_h^{t}(s' | s,a) D_{h+1}^t(s', \pi_{h+1}^{t+1}(s')) \\
& = & 2b_h^{t}(s,a) + \sum_{s' \in \cS}( \widehat{p}_h^{t}(s' | s,a) - p_h(s' |s,a)) D_{h+1}^t(s', \pi_{h+1}^{t+1}(s')) +  \sum_{s' \in \cS}p_h(s' |s,a)) D_{h+1}^t(s', \pi_{h+1}^{t+1}(s')) \\
& \leq & 3b_h^{t}(s,a) + \sum_{s' \in \cS}p_h(s' |s,a) D_{h+1}^t(s', \pi_{h+1}^{t+1}(s')),\end{eqnarray*}
and the result follows by induction.
\end{proof}

\begin{lemma}\label{lem:pigeon-hole}
    For any $T > 0$, $s\in\cS$, $a\in\cA$, $h\in[H]$,
    \begin{align*}
        \sum_{t=1}^T \indi{G_{s,a,h}^t} \sqrt{\frac{1}{Z_{h}^{t-1}(s,a) p_{h}^{\min}(s,a) \vee 1}} \leq 2\sqrt{\frac{Z_{h}^T(s,a)}{p_{h}^{\min}(s,a)}}.
    \end{align*}
\end{lemma}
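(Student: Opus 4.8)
The plan is to prove Lemma~\ref{lem:pigeon-hole} by a standard pigeon-hole / integral-comparison argument, carefully tracking how the targeting counter $Z_h^{t-1}(s,a)$ increments. First I would fix $(s,a,h)$ and abbreviate $p := p_h^{\min}(s,a)$ and $Z_t := Z_h^t(s,a)$, so that $Z_t = \sum_{j=1}^t \indi{G_{s,a,h}^j}$ is a non-decreasing integer-valued sequence that jumps by exactly $1$ on each episode $t$ where $G_{s,a,h}^t$ occurs. The key observation is that whenever $\indi{G_{s,a,h}^t}=1$, we have $Z_{t-1} = Z_t - 1$, and moreover the values of $Z_{t-1}$ over the set of targeting episodes enumerate exactly the integers $0,1,\dots,Z_h^T(s,a)-1$. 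Hence
\begin{align*}
\sum_{t=1}^T \indi{G_{s,a,h}^t}\sqrt{\frac{1}{Z_{h}^{t-1}(s,a)p \vee 1}} = \sum_{k=0}^{Z_h^T(s,a)-1}\sqrt{\frac{1}{kp \vee 1}}.
\end{align*}
It then remains to bound this deterministic sum.

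For the sum over $k$, I would split off the $k=0$ term (contributing $1$) and compare the rest to an integral: $\sum_{k=1}^{N-1}\frac{1}{\sqrt{kp}} \le \frac{1}{\sqrt{p}}\int_0^{N}\frac{dx}{\sqrt{x}} = \frac{2\sqrt{N}}{\sqrt{p}}$, where $N := Z_h^T(s,a)$. Adding the $k=0$ term gives $1 + \frac{2\sqrt N}{\sqrt p}$. To fold the additive $1$ into the clean bound $2\sqrt{N/p}$, I would note that if $N=0$ the left side is an empty sum equal to $0$ and the claim is trivial; and if $N \ge 1$, then since $p \le 1$ we have $1 \le \sqrt{N/p}$, so $1 + 2\sqrt{N/p} \le 3\sqrt{N/p}$ — but the statement asks for the constant $2$, so I would instead argue slightly more tightly, e.g. bounding $\sum_{k=0}^{N-1}\frac{1}{\sqrt{kp\vee 1}} \le \sum_{k=0}^{N-1}\frac{1}{\sqrt{(k\vee 1)p}} \cdot \mathbf{1} $ is not quite enough either; the cleanest route is $\sum_{k=0}^{N-1}(kp\vee 1)^{-1/2} \le \frac1{\sqrt p}\sum_{k=0}^{N-1}(k\vee 1)^{-1/2} \le \frac1{\sqrt p}\big(1 + \int_1^{N}x^{-1/2}dx\big) \le \frac1{\sqrt p}(1 + 2\sqrt N - 2) \le \frac{2\sqrt N}{\sqrt p}$, using $1 \le 2\sqrt N$ for $N\ge 1$. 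This closes the inequality with the stated constant.

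The only mildly delicate step is the first displayed identity — justifying that the multiset $\{Z_h^{t-1}(s,a) : 1\le t\le T,\ G_{s,a,h}^t \text{ holds}\}$ equals $\{0,1,\dots,Z_h^T(s,a)-1\}$. This follows because $t\mapsto Z_h^t(s,a)$ increases by exactly one precisely on targeting episodes and is constant elsewhere, so at the $k$-th targeting episode (in chronological order) the value of $Z_h^{t-1}(s,a)$ is exactly $k-1$. I expect this bookkeeping to be the main (very minor) obstacle; the rest is the routine integral bound above. Everything here is elementary and self-contained, requiring no facts beyond the definition of $Z_h^t(s,a)$ and the event $G_{s,a,h}^t$ given in the proof of Theorem~\ref{th:main-sample-complexity}.
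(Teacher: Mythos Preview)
Your approach is exactly the paper's: rewrite the sum via the pigeon-hole observation that the values $Z_h^{t-1}(s,a)$ over targeting episodes enumerate $0,1,\dots,Z_h^T(s,a)-1$, then bound the resulting harmonic-type sum. One small slip: the inequality $\sum_{k=0}^{N-1}(k\vee 1)^{-1/2} \le 1 + \int_1^{N}x^{-1/2}\,dx$ is false for $N=2$ (left side $2$, right side $2\sqrt{2}-1\approx 1.83$). The paper instead invokes the elementary bound $\sum_{i=1}^{n}1/\sqrt{i}\le 2\sqrt{n}-1$, which yields $1+\tfrac{1}{\sqrt p}\sum_{j=1}^{N-1}\tfrac{1}{\sqrt j}\le 1+\tfrac{2\sqrt{N}-1}{\sqrt p}\le \tfrac{2\sqrt N}{\sqrt p}$, the last step using $p\le 1$. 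Your final conclusion is correct; only that intermediate integral step needs this repair.
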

\begin{proof}
    Using the pigeon-hole principle together with the inequality $\sum_{i=1}^n 1/\sqrt{i} \leq 2\sqrt{n}-1$,
    \begin{align*}
        \sum_{t=1}^T \indi{G_{s,a,h}^t} \sqrt{\frac{1}{Z_{h}^{t-1}(s,a) p_{h}^{\min}(s,a) \vee 1}} 
        & \leq 1 + \frac{1}{\sqrt{p_{h}^{\min}(s,a)}}\sum_{j=2}^{Z_h^T(s,a)} \sqrt{\frac{1}{j-1}} 
        \\ &\leq 1 + \frac{2\sqrt{Z_{h}^T(s,a)} - 1}{\sqrt{p_{h}^{\min}(s,a)}}
        \leq 2\sqrt{\frac{Z_{h}^T(s,a)}{p_{h}^{\min}(s,a)}}.
    \end{align*}
\end{proof}

\begin{lemma}\label{lem:bound-epsilon}
    For any time $T > 0$ at the end of which the algorithm did not stop, for any $s\in\cS,a\in\cA,h\in[H]$,
    \begin{align*}
        Z_{h}^T(s,a) \leq \frac{144H^4\beta(T,\delta)}{p_{h}^{\min}(s,a)\epsilon^2}.
    \end{align*}
\end{lemma}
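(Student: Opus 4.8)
The plan is to replay, almost verbatim, the argument used in the proof of Theorem~\ref{th:main-sample-complexity} to bound $Z_h^T(s,a)$ against the conditional return gap, but with the stopping condition playing the role of the gap and Lemma~\ref{lem:upper-bound-stopping} playing the role of Lemma~\ref{lem:policy-value-gap-vs-conf}. Fix $(s,a,h)$ and $T$ as in the statement; we may assume $Z_h^T(s,a)\geq 1$, since otherwise the bound is trivial. The first step is to turn ``the algorithm did not stop at $T$'' into a lower bound on the played bonuses: for every $t\leq T$ the stopping test fails at episode $t-1$, i.e.\ $\max_a\overline Q_1^{t-1}(s_1,a)-\max_a\underline Q_1^{t-1}(s_1,a)>\epsilon$ (the degenerate $t=1$ case being absorbed by the convention that the initial confidence width at $s_1$ equals $H>\epsilon$, which is without loss of generality). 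Since $\pi^t$ is greedy with respect to $\overline Q^{t-1}$, Lemma~\ref{lem:upper-bound-stopping} applied at time $t-1$ gives, on $\cE$, $\epsilon<3\sum_{\ell=1}^H\sum_{s',a'}p_\ell^{\pi^t}(s',a')\,b_\ell^{t-1}(s',a')$.

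Next I would collapse the right-hand side using the ``target'' structure. On the event $G^t_{s,a,h}$ the triplet $(s,a,h)$ attains the maximum of $b_\ell^{t-1}(\cdot)$ over all triplets reached by $\pi^t$, and $\sum_{s',a'}p_\ell^{\pi^t}(s',a')=1$ for each $\ell$, so $\sum_{\ell,s',a'}p_\ell^{\pi^t}(s',a')\,b_\ell^{t-1}(s',a')\leq H\,b_h^{t-1}(s,a)$, hence $\epsilon<3H\,b_h^{t-1}(s,a)$ on $G^t_{s,a,h}$. Multiplying by $\indi{G^t_{s,a,h}}$ and summing over $t\leq T$ yields $\epsilon\,Z_h^T(s,a)<3H\sum_{t=1}^T\indi{G^t_{s,a,h}}\,b_h^{t-1}(s,a)$. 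The sum on the right is controlled exactly as in steps (a)--(c) of the proof of Theorem~\ref{th:main-sample-complexity}: bound $b_h^{t-1}(s,a)\leq H\big(\sqrt{\beta(n_h^{t-1}(s,a),\delta)/(n_h^{t-1}(s,a)\vee1)}\wedge1\big)$, pass to pseudo-counts via Lemma~7 of \cite{Kaufmann21RFE} (valid on $\cE^c$), use $\overline n_h^{t-1}(s,a)\geq Z_h^{t-1}(s,a)\,p_h^{\min}(s,a)$ together with monotonicity of $\beta$, and finish with Lemma~\ref{lem:pigeon-hole}. This gives $3H\sum_{t=1}^T\indi{G^t_{s,a,h}}\,b_h^{t-1}(s,a)\leq 12H^2\sqrt{\beta(T,\delta)\,Z_h^T(s,a)/p_h^{\min}(s,a)}$, so combining with the previous display, dividing by $\sqrt{Z_h^T(s,a)}$ and squaring produces $\epsilon^2 Z_h^T(s,a)<144H^4\beta(T,\delta)/p_h^{\min}(s,a)$, which is the claim.

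Since every tool needed (Lemma~\ref{lem:upper-bound-stopping}, Lemma~7 of \cite{Kaufmann21RFE}, Lemma~\ref{lem:pigeon-hole}, monotonicity of the threshold $\beta$) is already in hand and the computation is the same one as in the main proof up to constants, I do not expect a genuine obstacle. The only points demanding care are purely clerical: keeping the time indices straight (the event $G^t_{s,a,h}$ involves $\pi^t$ and $b^{t-1}$, so Lemma~\ref{lem:upper-bound-stopping} must be invoked at $t-1$, and ``not stopped'' must be read as the test failing at $t-1$), and dispatching the $t=1$ corner case.
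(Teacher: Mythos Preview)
Your proposal is correct and matches the paper's own proof essentially line for line: invoke the failed stopping test together with Lemma~\ref{lem:upper-bound-stopping} to get $\epsilon<3\sum_{\ell,s',a'}p_\ell^{\pi^t}(s',a')b_\ell^{t-1}(s',a')$, collapse the bonus sum to $3H\,b_h^{t-1}(s,a)$ on $G_{s,a,h}^t$, and then reuse the exact chain (a)--(c) from the proof of Theorem~\ref{th:main-sample-complexity} to reach $12H^2\sqrt{\beta(T,\delta)Z_h^T(s,a)/p_h^{\min}(s,a)}$ and square. Your extra care with the $t$ vs.\ $t-1$ indexing and the $t=1$ corner case is more explicit than the paper, but the argument is the same.
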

\begin{proof}
    If the algorithm did not stop at the end of time $T$, by the definition of the stopping rule and Lemma \ref{lem:upper-bound-stopping}, for all $t\leq T$,
\begin{align*}
    \epsilon \leq \max_{a} \overline Q_1^{t}(s_1,a) - \max_{a} \underline Q_1^{t}(s_1,a) \leq 3\sum_{h=1}^H\sum_{s,a} p_h^{{\pi}^{t+1}}(s,a)b_h^t(s,a).
\end{align*}
Summing both sides over times where $(s,a,h)$ is targeted,
\begin{align*}
    \epsilon Z_{h}^T(s,a) 
    = \epsilon \sum_{t=1}^T \indi{G_{s,a,h}^t}
    &\leq 3\sum_{t=1}^T \indi{G_{s,a,h}^t}\sum_{\ell=1}^H\sum_{s',a'} p_\ell^{{\pi}^{t}}(s',a')b_\ell^{t-1}(s',a')
    \\ &\leq 3H \sum_{t=1}^T \indi{G_{s,a,h}^t} b_{h}^{t-1}(s,a) \leq 12H^2 \sqrt{\frac{Z_{h}^T(s,a)\beta(T,\delta)}{p_{h}^{\min}(s,a)}},
\end{align*}
where the last inequality was already derived in the proof of Theorem \ref{th:main-sample-complexity}. The statement follows by solving the resulting inequality in $Z_h^T(s,a)$.
\end{proof}

\begin{lemma}\label{lem:bound-beta}
    Let $S\geq 2$. For any time $t\geq 1$, $\beta(t-1,\delta) \leq 5\log\frac{3SAH}{\delta} + 4S + 4S\log\left(t\right)$.
\end{lemma}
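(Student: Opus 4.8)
The plan is a direct computation starting from the definition $\beta(t-1,\delta) = \big(\sqrt{\beta^r(t-1,\delta)} + \sqrt{2\beta^p(t-1,\delta)}\big)^2$. First I would apply the elementary inequality $(x+y)^2 \le 2x^2 + 2y^2$ to obtain $\beta(t-1,\delta) \le 2\beta^r(t-1,\delta) + 4\beta^p(t-1,\delta)$, which decouples the reward and transition thresholds and reduces the problem to bounding two explicit expressions.

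Next I would bound each term separately. For the reward part, $2\beta^r(t-1,\delta) = \log(3SAH/\delta) + \log(e(1+(t-1))) = \log(3SAH/\delta) + 1 + \log t$. For the transition part, the only mildly delicate point is to remove the $S$-dependence from $\log\!\big(e(1 + (t-1)/(S-1))\big)$: since $S \ge 2$ we have $S - 1 \ge 1$, hence $(t-1)/(S-1) \le t-1$ and therefore $1 + (t-1)/(S-1) \le t$, which gives $\log\!\big(e(1+(t-1)/(S-1))\big) \le 1 + \log t$. Consequently $4\beta^p(t-1,\delta) \le 4\log(3SAH/\delta) + 4(S-1)(1 + \log t)$.

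Summing the two bounds yields $\beta(t-1,\delta) \le 5\log(3SAH/\delta) + (4S-3)(1+\log t)$. Finally, since $t \ge 1$ implies $\log t \ge 0$, and $4S - 3 \le 4S$, both $(4S-3) \le 4S$ and $(4S-3)\log t \le 4S\log t$, so $\beta(t-1,\delta) \le 5\log(3SAH/\delta) + 4S + 4S\log t$, which is the claimed inequality.

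The argument is entirely routine arithmetic; the only steps requiring any care are the removal of the $S$-dependence inside the transition-threshold logarithm (handled by $S \ge 2$) and the nonnegativity of $\log t$ (handled by $t \ge 1$). If a sharper constant were desired one could replace the split $(x+y)^2\le 2x^2+2y^2$ by $\big(\sqrt a + \sqrt{2b}\big)^2 \le 3a + 3b$ (which follows from $(2a-b)^2 \ge 0$), but the cruder bound already suffices here.
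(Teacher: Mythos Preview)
Your proof is correct and follows essentially the same approach as the paper: both apply $(x+y)^2\le 2x^2+2y^2$ to split into $2\beta^r+4\beta^p$, use $S\ge 2$ to bound $1+(t-1)/(S-1)\le t$, and then collect terms into $5\log(3SAH/\delta)+4S\log(et)$. The only cosmetic difference is that the paper writes $\log(et)+4(S-1)\log(et)\le 4S\log(et)$ directly, whereas you factor as $(4S-3)(1+\log t)$ first; the computations are otherwise identical.
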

\begin{proof}
    Starting from the definition of $\beta$ and using the inequality $(x+y)^2 \leq 2x^2 + 2y^2$,
    \begin{align*}
        \beta(t-1,\delta) &= \left(\sqrt{\frac{1}{2}\left(\log\frac{3SAH}{\delta} + \log(et)\right)} + \sqrt{2\log\frac{3SAH}{\delta} + 2(S-1)\log\left(e\left(1+\frac{t-1}{S-1}\right)\right)}\right)^2
        \\ &\leq 5\log\frac{3SAH}{\delta} + \log(et) + 4(S-1)\log\left(e\left(1+\frac{t-1}{S-1}\right)\right)
        \\ &\leq 5\log\frac{3SAH}{\delta} + \log(et) + 4(S-1)\log\left(et\right)
        \\ &\leq 5\log\frac{3SAH}{\delta} + 4S\log\left(et\right)
        \\ &= 5\log\frac{3SAH}{\delta} + 4S + 4S\log\left(t\right).
    \end{align*}
\end{proof}

\begin{lemma}\label{lem:simplify-ineq}
    Let $B,C \geq 1$. If $k \leq B\log(k) + C$, then
    \begin{align*}
    k \leq B\log(B^2 + 2C) + C.
    \end{align*}
\end{lemma}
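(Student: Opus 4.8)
\textbf{Proof plan for Lemma \ref{lem:simplify-ineq}.}

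The plan is to bound $\log(k)$ from above by an explicit function of $B$ and $C$ and then plug this back into the hypothesis $k \le B\log(k) + C$. The hypothesis itself gives a crude bound: since $B,C\ge 1$, we have $\log(k) \le \log(B\log(k)+C)$, which is not yet closed-form because $\log(k)$ still appears on the right. To break the recursion, I would first establish the elementary inequality $\log(x) \le x/e \le x$ (or, more wastefully, $\log(x)\le x-1 \le x$) applied to $x=\log(k)$, giving $\log(k) \le k$ is useless, so instead I apply a sublinear bound: for any $a>0$, $\log(k) \le ak - \log(a) - 1 \le ak$ when $a\le 1$; choosing $a = 1/(2B)$ yields $\log(k) \le k/(2B)$. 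Substituting into the hypothesis gives $k \le B\cdot\big(k/(2B)\big) + C = k/2 + C$, hence $k \le 2C$.

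With the rough bound $k \le 2C$ in hand, I would substitute once more into the original hypothesis, using monotonicity of $\log$: $k \le B\log(k) + C \le B\log(2C) + C$. This is already almost the claimed form; to match $B\log(B^2+2C)+C$ exactly it suffices to note $2C \le B^2 + 2C$ and again invoke monotonicity of $\log$, so $k \le B\log(B^2+2C)+C$. (The extra $B^2$ term is harmless slack that makes the final expression cleaner and is presumably the form needed when this lemma is invoked to solve \eqref{eq:tau-implicit-bound} for $\tau$.)

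The only mild subtlety — and the step I would be most careful about — is the choice of the constant $a=1/(2B)$ and checking the elementary inequality $\log(k) \le k/(2B)$ holds for \emph{all} $k\ge 1$, not just asymptotically; this follows from the fact that $f(k) = k/(2B) - \log(k)$ has derivative $1/(2B) - 1/k$, which is negative for $k < 2B$ and positive afterward, so $f$ is minimized at $k=2B$ where $f(2B) = 1 - \log(2B) \ge 1 - \log 2 - \log B$; this is nonnegative only if $B$ is not too large, so in fact a cleaner route is to use $\log(k) \le \sqrt{k}$ combined with $\sqrt{k}\le k/(2B)$ once $k \ge 4B^2$, treating the case $k < 4B^2$ separately (there $k < 4B^2 \le B\log(B^2+2C)+C$ is immediate since $\log(B^2+2C)\ge \log(B^2)$... ). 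To avoid this case split entirely, the robust approach is: from $k \le B\log k + C$ and the universally valid $\log k \le k/e$, we get nothing, so instead directly use $\log k \le 2\sqrt{k}-2 < 2\sqrt k$, hence $k \le 2B\sqrt k + C$, a quadratic in $\sqrt k$ giving $\sqrt k \le B + \sqrt{B^2+C}$, so $k \le (B+\sqrt{B^2+C})^2 \le 2B^2 + 2(B^2+C) \le \dots$; then feed \emph{this} closed-form bound on $k$ back into $k \le B\log k + C$. I expect the published proof takes one of these two clean routes, and the main work is just picking the substitution that lands exactly on $B\log(B^2+2C)+C$ without an awkward case analysis.
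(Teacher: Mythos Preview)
Your final ``robust approach'' is exactly the paper's proof: use a square-root bound on $\log$, solve the resulting quadratic in $\sqrt{k}$, and substitute the crude bound back into the hypothesis. The only refinement the paper makes is to use the sharper (and still universally valid for $k\ge 1$) inequality $\log k \le \sqrt{k}$ rather than your $\log k \le 2\sqrt{k}$; this gives $\sqrt{k} \le \tfrac{B}{2}+\sqrt{\tfrac{B^2}{4}+C}$, and then $(x+y)^2 \le 2(x^2+y^2)$ yields $k \le B^2+2C$ on the nose, so the final substitution lands exactly on $B\log(B^2+2C)+C$ with no case analysis needed.
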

\begin{proof}
    Since $\log(k) \leq \sqrt{k}$ for any $k\geq 1$, we have that $k \leq B\sqrt{k} + C$.
    Solving this second-order inequality, we get the crude bound $\sqrt{k} \leq \frac{B}{2} + \sqrt{\frac{B^2}{4} + C}$, which in turns yields $k \leq B^2 + 2C$ using that $(x+y)^2 \leq 2(x^2+y^2)$ for $x,y\geq 0$. The statement follows by plugging this bound into the logarithm.
\end{proof}

\subsection{Explicit sample complexity bound}\label{app:conclude-proof}

We show how to derive the sample complexity bound stated in Theorem \ref{th:main-sample-complexity} starting from the one derived in \eqref{eq:tau-implicit-bound}. Let $\cC(\epsilon) := \sum_{h=1}^H \sum_{s\in\cS}\sum_{a\in\cA} \frac{H^4}{p_{h}^{\min}(s,a) \max \{\widetilde{\Delta}_{h}(s,a),\epsilon \}^2}$. Since $\beta(\tau-1,\delta) \leq 5\log\frac{3SAH}{\delta} + 4S + 4S\log\left(\tau\right)$ from Lemma \ref{lem:bound-beta}, \eqref{eq:tau-implicit-bound} implies that
\begin{align*}
    \tau &\leq 720 \cC(\epsilon)\log\frac{3SAH}{\delta} + 576 \cC(\epsilon)S + 576\cC(\epsilon)S\log\left(\tau\right) + 1
    \\ &\leq 720 \cC(\epsilon)\log\frac{3SAH}{\delta} + 577 \cC(\epsilon)S + 576\cC(\epsilon)S\log\left(\tau\right),
\end{align*}
where the second inequality holds since $1 \leq \cC(\epsilon)S$. Using Lemma \ref{lem:simplify-ineq} with $B = 576\cC(\epsilon)S$ and $C = 720 \cC(\epsilon)\log\frac{3SAH}{\delta} + 577 \cC(\epsilon)S$,
\begin{align*}
    \tau &\leq 576\cC(\epsilon)S\log\left(576^2\cC(\epsilon)^2S^2 + 1440 \cC(\epsilon)\log\frac{3SAH}{\delta} + 1154 \cC(\epsilon)S\right) + 720 \cC(\epsilon)\log\frac{3SAH}{\delta} + 577 \cC(\epsilon)S
    \\ &\leq 576\cC(\epsilon)S\log\left(4 \cdot 576^2\cC(\epsilon)^2S^2 \log\frac{3SAH}{\delta} \right) + 720 \cC(\epsilon)\log\frac{3SAH}{\delta} + 577 \cC(\epsilon)S
    \\ &\leq 1152\cC(\epsilon)S\log\left( 1152\cC(\epsilon)S \log\frac{3SAH}{\delta} \right) + 720 \cC(\epsilon)\log\frac{3SAH}{\delta} + 577 \cC(\epsilon)S
    \\ &\leq 1729\cC(\epsilon)S\log\left( 1152\cC(\epsilon)S \log\frac{3SAH}{\delta} \right) + 720 \cC(\epsilon)\log\frac{3SAH}{\delta},
\end{align*}
where the inequalities use some trivial bounds to simplify the final expression. The result stated in Theorem \ref{th:main-sample-complexity} follows from here by noting that $\cC(\epsilon) \leq \frac{SAH^5}{p_{\min}\epsilon^2}$.


\section{Proof of Theorem \ref{th:regret-vs-bpi}} \label{app:regret-vs-pac}

\begin{figure}
\centering
\includegraphics[scale=0.8]{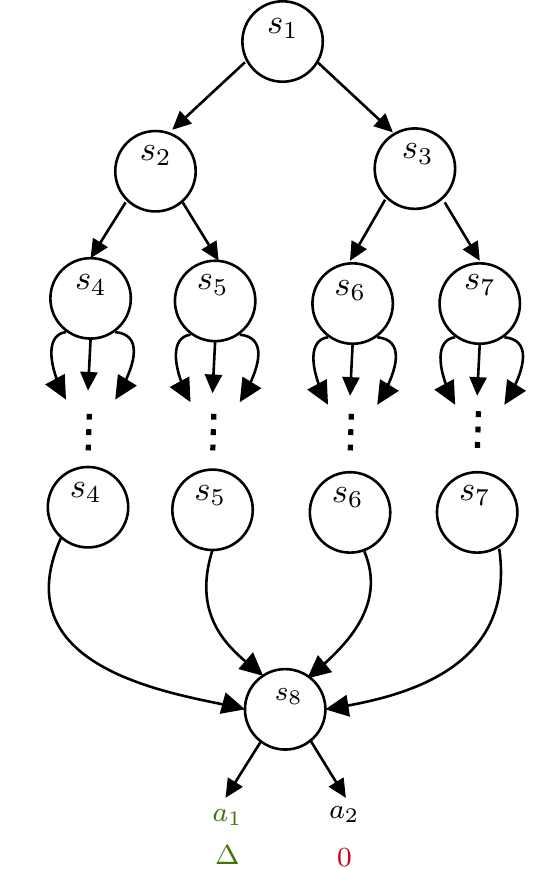}
\caption{Example of the MDP instance for proving Theorem \ref{th:regret-vs-bpi} where $S=8$, $A=3$, and $H \geq 4$. The reward is Gaussian with variance $1$ and zero mean except for $r_H(s_8,a_1)= \Delta > 0$. The optimal policy takes action $a_1$ in $s_8$ at the last stage $H$, while any path of length $H-1$ to reach that state is optimal.}\label{fig:bpi-vs-regret}
\end{figure}

An example of the MDP instance we build to prove Theorem \ref{th:regret-vs-bpi} is shown in Figure \ref{fig:bpi-vs-regret}. Suppose $S \geq 4$, $A\geq 2$ and $H \geq \lceil\log_2(S)\rceil + 1$. We arrange the states in a binary tree, starting from the root and adding them one by one from left to right and top to bottom. The leaves of the binary tree have all $A$ actions available, and so do the states at the second-last layer which have zero children. Such actions keep the agent in the same state up to layer $H-1$. In layer $H-1$, all $A$ actions for all reachable states transition to state $s_8$. In the latter, only two actions are available, among which $a_1$ is the only in the whole MDP with a positive reward of $\Delta > 0$.

The intuition is that this is an extremely easy instance of regret minimization. In fact, it is essentially a two-armed bandit where the only thing that must be learned is the optimal action at stage $H$ (i.e., $a_1$), while the agent can behave arbitrarily in all stages before and still suffer zero regret. On the other hand, this is an extremely hard instance for PAC identification since, in order to return an $\epsilon$-optimal policy with enough confidence, any algorithm must explore all state-action pairs at stages from $1$ to $H-1$ up to an error below $\epsilon$ in order to assess that their rewards are all $\epsilon$-close.

\paragraph{Proof of the sample-complexity lower bound}

Let us start by proving the lower bound on the sample complexity of any $(\epsilon,\delta)$-PAC algorithm. Let $d$ be the depth of the tree, i.e., the first integer such that $\sum_{i=0}^{d-1} 2^i = 2^d - 1 \geq S-1$. That is $d = \lceil \log_2(S) \rceil$. Note that, even if the last layer is not complete (i.e., it has less than $2^{d-1}$ states), the second last layer must be complete. Therefore, the are at least $2^{d-2} \geq S/4$ states with all $A$ actions available that are reachable at stage $H-1$. Call these states $\bar{s}_1,\dots,\bar{s}_m$ for $m$ some integer with $m\geq S/4$. Moreover, $\overline{\Delta}_{H-1}(\bar{s}_i,a) = 0$ for all $i\in[m]$ and $a\in[A]$ since all paths are optimal up to stage $H-1$, where $\overline{\Delta}$ denotes the deterministic return gaps of \cite{tirinzoni22deterministic}. Note that the MDP is deterministic, so the lower bound of Theorem 2 by \cite{tirinzoni22deterministic} holds. By applying this result, we get
\begin{align*}
\forall i\in[m], a\in[A] : \bE[n_{H-1}^\tau(\bar{s}_i,a)] \geq \frac{\log(1/4\delta)}{4\epsilon^2}.
\end{align*}
This directly implies that
\begin{align*}
\bE[\tau] = \sum_{s\in\cS_{H-1}}\sum_{a\in[A]} \bE[n_{H-1}^\tau(s,a)] \geq \frac{SA\log(1/4\delta)}{16\epsilon^2}.
\end{align*}

\paragraph{Proof of the regret upper bound}

Let us now deal with regret minimization. Let us take the UCBVI algorithm \citep{Azar17UCBVI} with Hoeffding bonus (for general stochastic transitions) that we described in Section \ref{sec:bpi-ucrl}. 
We shall consider a slightly different \emph{stage-dependent} definition of the bonuses $b_h^t(s,a)$. All we need is that, at any time $t\in\mathbb{N}$, they guarantee concentration for all $(s,a,h)$ with probability at least $1-\frac{1}{t^2}$. For our proof we only need to specify the specific form at stage $H$. Since at that stage we only need to concentrate rewards, using Hoeffding's inequality for sub-Gaussian distributions with $\sigma^2=1$, it is easy to see that
\begin{align*}
    b_H^t(s,a) := \sqrt{\frac{2\log (2SAHt^2)}{n_H^{t}(s,a)}} \wedge 1
\end{align*}
ensures $\overline{Q}_H^t(s,a) \geq Q_H^\star(s,a)$ for all $s\in \cS,a\in \cA,t\in\mathbb{N}$ with probability at least $1-\frac{1}{t^2}$. For all stages $h=1,\dots,H-1$, we can simply take the bonuses considered in the main paper with a decreasing schedule for $\delta$, though their explicit expression is not really used in our proof.

Note that, in this particular MDP, the regret is zero whenever the agent plays action $a_1$ at stage $H$ since all actions played from stage $1$ to $H-1$ are optimal. In other words, this is equivalent to a bandit problem with two actions. Therefore, for any $T\geq 1$,
\begin{align*}
\sum_{t=1}^T \Big( V_1^\star(s_1) - V_1^{\pi^t}(s_1) \Big) = \Delta \sum_{t=1}^T \indi{a_H^t = a_2} = \Delta n_H^T(s_H,a_2).
\end{align*}
Under the good event $\cG_t$ in which the confidence intervals are valid at $t$, if $a_H^t = a_2$, then
\begin{align*}
b_H^{t-1}(s_H,a_2) \geq \frac{\Delta}{2} \implies  n_H^{t-1}(s_H,a_2) \leq \frac{8}{\Delta^2}\log (2SAHt^2).
\end{align*}
Therefore, the cumulative regret up to any time $T$ in such good events can be bounded as
\begin{align*}
\sum_{t\leq T: \cG_t} \Big( V_1^\star(s_1) - V_1^{\pi^t}(s_1) \Big) \leq \Delta \sum_{t\leq T: \cG_t} \indi{a_H^t = a_2, n_H^{t-1}(s_H,a_2) \leq \frac{8}{\Delta^2}\log (2SAHT^2)} \leq \frac{8}{\Delta}\log (2SAHT^2).
\end{align*}
On the other hand, the expected regret under the bad events is bounded as
\begin{align*}
\bE\left[\sum_{t\leq T: \neg\cG_t} \Big( V_1^\star(s_1) - V_1^{\pi^t}(s_1) \Big)\right] \leq \Delta \sum_{t=1}^T \bP(\neg\cG_t) \leq \Delta \sum_{t=1}^T \frac{1}{t^2} \leq 2\Delta.
\end{align*}
Combining these two we obtain the following bound on the expected cumulative regret:
\begin{align*}
\bE\left[\sum_{t=1}^T \Big( V_1^\star(s_1) - V_1^{\pi^t}(s_1) \Big)\right] \leq \frac{8}{\Delta}\log (2SAHT^2) + 2\Delta.
\end{align*}
Finally,
\begin{align*}
T_\epsilon := \inf_{T\in\mathbb{N}}\left\{T : \frac{1}{T}\sum_{t=1}^T \bE\left[ V_1^\star(s_1) - V_1^{\pi^t}(s_1) \right] \leq \epsilon\delta \right\} \leq \inf_{T\in\mathbb{N}}\left\{T : \frac{8}{\Delta}\log (2SAHT^2) + 2\Delta \leq T\epsilon\delta \right\}.
\end{align*}
To bound $T_\epsilon$, we need to solve the inequality on the right-hand side above. Using $\log(T)\leq\sqrt{T}$, it is easy to show that a crude bound is
\begin{align*}
T \leq \frac{260}{\Delta^2\epsilon^2\delta^2}\left( 4\log(2SAH) + \Delta^2\right).
\end{align*}
Plugging this into the logarithm above yields
\begin{align*}
T_\epsilon \leq \frac{2}{\Delta\epsilon\delta}\left( 4\log(2SAH) + 16\log \frac{17}{\Delta\epsilon\delta} + 8\log\big(4\log(2SAH) + \Delta^2\big) + \Delta^2 \right) + 1.
\end{align*}
Setting $\Delta = \epsilon$ and using $\log\big(4\log(2SAH) + \Delta^2\big) \leq 4\log(2SAH) + \Delta^2$ concludes the proof. \hfill $\blacksquare$

\end{document}